\documentclass{article}

\usepackage{microtype}
\usepackage{graphicx}
\usepackage{subcaption}
\usepackage{booktabs} 

\usepackage{multirow}
\usepackage[table]{xcolor}

\usepackage{hyperref}

\usepackage[accepted]{icml2026}

\usepackage{amsmath}
\usepackage{amssymb}
\usepackage{mathtools}
\usepackage{amsthm}

\usepackage{enumitem}

\usepackage{xcolor}
\usepackage{mdframed}
\usepackage{fvextra}

\newcommand{\promptfile}[2][]{%
  \begin{mdframed}[backgroundcolor=gray!10,linewidth=0pt,roundcorner=3pt]
    \ttfamily
    \VerbatimInput[
      breaklines=true,
      #1
    ]{#2}
  \end{mdframed}%
}

\usepackage[capitalize,noabbrev]{cleveref}

\theoremstyle{plain}
\newtheorem{theorem}{Theorem}[section]
\newtheorem{proposition}[theorem]{Proposition}

\theoremstyle{definition}

\theoremstyle{remark}

\usepackage[textsize=tiny]{todonotes}

\usepackage{xspace}

\newcommand{\nickname}{R\textsuperscript{3}L\xspace}

\icmltitlerunning{R\smash{\textsuperscript{3}}L: Reasoning 3D Layouts from Relative Spatial Relations}

\begin{document}

\twocolumn[

  \icmltitle{\nickname: Reasoning 3D Layouts from Relative Spatial Relations}



  \icmlsetsymbol{equal}{*}

  \begin{icmlauthorlist}
  \icmlauthor{Zhifeng Gu}{equal,space}
  \icmlauthor{Yuqi Wang}{equal,space}
  \icmlauthor{Bing Wang}{space}
  \end{icmlauthorlist}

  \icmlaffiliation{space}{Spatial Intelligence Group, The Hong Kong Polytechnic University}
  \icmlcorrespondingauthor{Bing Wang}{bingwang@polyu.edu.hk}

  \icmlkeywords{Machine Learning, ICML}

  \vskip 0.3in
]



\printAffiliationsAndNotice{\icmlEqualContribution}

\begin{abstract}

Relative spatial relations provide a compact representation of spatial structure and are fundamental to relative spatial reasoning in 3D layout generation. 
Recent works leverage Multimodal Large Language Models (MLLMs) to infer such relations, but the inferred relations are often unreliable and are typically handled with post-hoc heuristics.
In this paper, we propose \textbf{\nickname}, a general framework that improves the reliability and consistency of relative spatial reasoning for 3D layout generation. 
Our key motivation is that multi-hop reasoning requires repeated reference-frame transformations, which accumulate errors in inferred relations and lead to semantic and metric drift.
To mitigate this, we propose invariant spatial decomposition to break coupled relation chains, and consistent spatial imagination to promote self-consistency through an imagine-and-revise loop.
We further introduce supportive spatial optimization to ease pose optimization via global-to-local coordinate re-parameterization. 
Extensive experiments across diverse scene types and instructions demonstrate that \textbf{\nickname} produces more physically feasible and semantically consistent layouts. 
Notably, our analysis shows that resolving frame-induced inconsistencies is crucial for reliable multi-hop relative spatial reasoning.
The code is available at \href{https://github.com/Neal2020GitHub/R3L}{https://github.com/Neal2020GitHub/R3L}.

\end{abstract}

\section{Introduction}
\label{sec:introduction}

\begin{figure}[t]
    \centering
    \includegraphics[width=1.0\columnwidth]{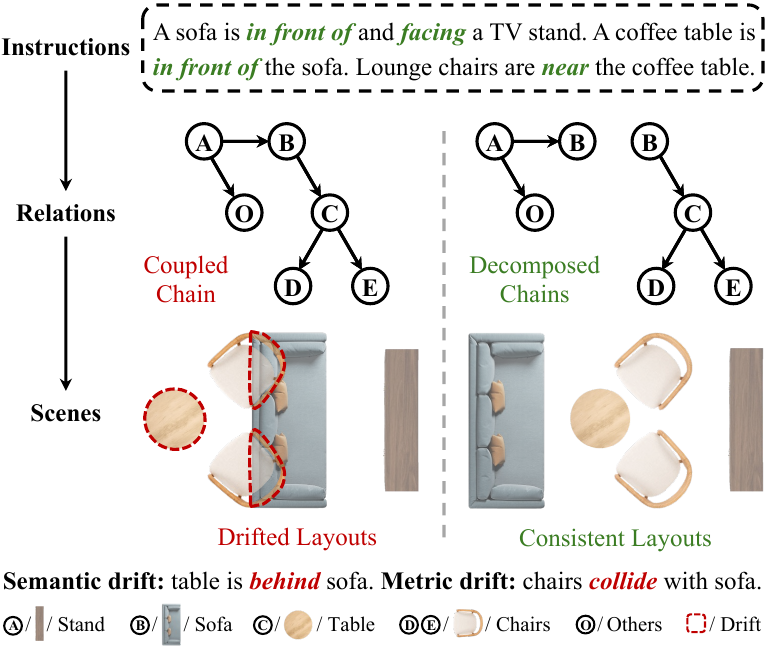}
    \caption{ Previous methods (left) reason over coupled relation chains, where repeated reference-frame transformations accumulate errors in inferred relations and lead to drifted layouts. In contrast, \textbf{\nickname} (right) decomposes long coupled chains into shorter sub-chains to reduce reference-frame transformations and error accumulation, producing feasible and consistent layouts. }
    \label{fig:teaser}
\end{figure}

Relative spatial relations describe object-to-object spatial relationships under a specified reference frame \cite{levinson2003space}, providing a compact and human-aligned representation of spatial structure. 
They are essential for vision and robotics tasks that require grounding high-level relational intent into concrete spatial arrangements \cite{gervet2023navigating, shao2025large, grauman2022ego4d}.
3D layout generation, a representative of these tasks, formulates scene structure through relative spatial relations and instantiates them into a final layout. 
At the core of inferring relative spatial relations is relative spatial reasoning \cite{gardner1983frames, li2002turning, lee2025perspective}, the ability to reason about relations across multiple objects.

In conventional 3D layout generation, layouts are largely generated from hand-crafted rules or dataset-specific priors \cite{merrell2011interactive, paschalidou2021atiss, tang2024diffuscene}, which often generalize poorly.
Recent works instead leverage Multimodal Large Language Models (MLLMs) \cite{hurst2024gpt, comanici2025gemini} to reason over relative spatial relations, and instantiate the inferred relations into object poses \cite{yang2024holodeck, sun2025layoutvlm}. 
However, prior works rarely explore the reliability of the relative spatial reasoning process, which can produce relations that are semantically inconsistent or physically infeasible.
Consequently, existing pipelines rely on post-hoc heuristics to make these relations ``\textit{solvable}'' for downstream instantiation, such as grid-map discretization \cite{yang2024holodeck, ccelen2024design} or pruning conflicting relations \cite{sun2025layoutvlm}, often at the cost of semantic fidelity.

A more principled alternative is to resolve such issues during relation reasoning, making the relations ``\textit{correct}'' both semantically and physically.
This raises a core problem: how to fully realize MLLMs’ potential for relative spatial reasoning to produce semantically consistent and physically feasible relations for 3D layout generation.

In relative spatial reasoning, the main challenge often lies in multi-hop reasoning over \textbf{chains of relations}. 
When two objects are not directly linked, their relation must be inferred by composing per-hop (\textit{i.e.}, pairwise) relations through intermediate objects into a relation chain. 
Since each hop is expressed in an object-centric frame, reasoning over such relation chains inevitably requires repeated transformations across object-centric reference frames.
This forces MLLMs to repeatedly re-express intermediate conclusions under new frames, allowing errors to accumulate along the chain, as illustrated in \cref{fig:teaser}.
Such reference-frame transformation is analogous to \textit{mental rotation} in cognitive science \cite{shepard1971mental, zacks2008neuroimaging}.

These accumulated errors manifest in two common failure modes.
\textbf{Semantic drift} occurs when directional relations are re-interpreted under different reference frames.
Even slight frame-induced mismatches (\textit{e.g.}, a local-axis swap that remaps left/right to up/down) can distort directional semantics and introduce inconsistencies into the inferred relations.
\textbf{Metric drift} arises when metric displacements are composed as per-hop vectors under changing reference frames.
Even small geometric misalignments can compound in the inferred relations, leading to inconsistent spacing, collisions, and physically infeasible layouts.

To address these issues, we propose a general framework, \textbf{\nickname}, to improve the reliability and consistency of relative spatial reasoning for 3D layout generation. 
We present \textbf{invariant spatial decomposition}, which partitions a scene into frame-invariant units whose intra-unit configurations are preserved under reference-frame transformations. 
By factorizing relations into intra- and inter-unit relations, this decomposition breaks long relation chains, thereby reducing reference-frame transformations during multi-hop reasoning and mitigating semantic drift. 
We further propose \textbf{consistent spatial imagination} for self-consistent relation inference. 
Using frame-invariant units, the MLLM imagines object placements in unit-local and global frames, detects geometric violations, and iteratively revises inconsistent relations. 
This improves multi-hop metric composition and mitigates metric drift, while enabling unit-level revision to correct local errors before they propagate globally.
Finally, we design \textbf{supportive spatial optimization}, where global-to-local pose re-parameterization supports pose optimization. 
We optimize the full scene under a unified objective, updating object poses in unit-local coordinates to stabilize optimization while preserving global coherence.

Overall, our contributions are as follows:

\begin{itemize}[leftmargin=*]
\setlength{\itemsep}{1pt}
\setlength{\parsep}{1pt}
\setlength{\parskip}{1pt}

    \vspace{-0.3cm}
    \item We introduce \textbf{\nickname}, a general framework for 3D layout generation that improves the reliability and consistency of multi-hop relative spatial reasoning. 
    
    \item We present invariant spatial decomposition, which partitions scenes into frame-invariant units, reducing reference-frame transformations during multi-hop reasoning.
    
    \item We propose consistent spatial imagination, a local-to-global imagine-and-revise mechanism that promotes self-consistency and feasibility during relation inference. 

    \item We design supportive spatial optimization, which uses global-to-local pose re-parameterization to stabilize optimization and maintain global layout coherence.
    
    \vspace{-0.3cm}
    
\end{itemize}

Extensive experiments across diverse scene types and instructions demonstrate that \textbf{\nickname} produces physically feasible and semantically consistent layouts. 
Ablation studies further show that the proposed modules reduce semantic and metric drift in multi-hop relation inference.

\begin{figure*}[t]
    \centering
    \includegraphics[width=2.0\columnwidth]{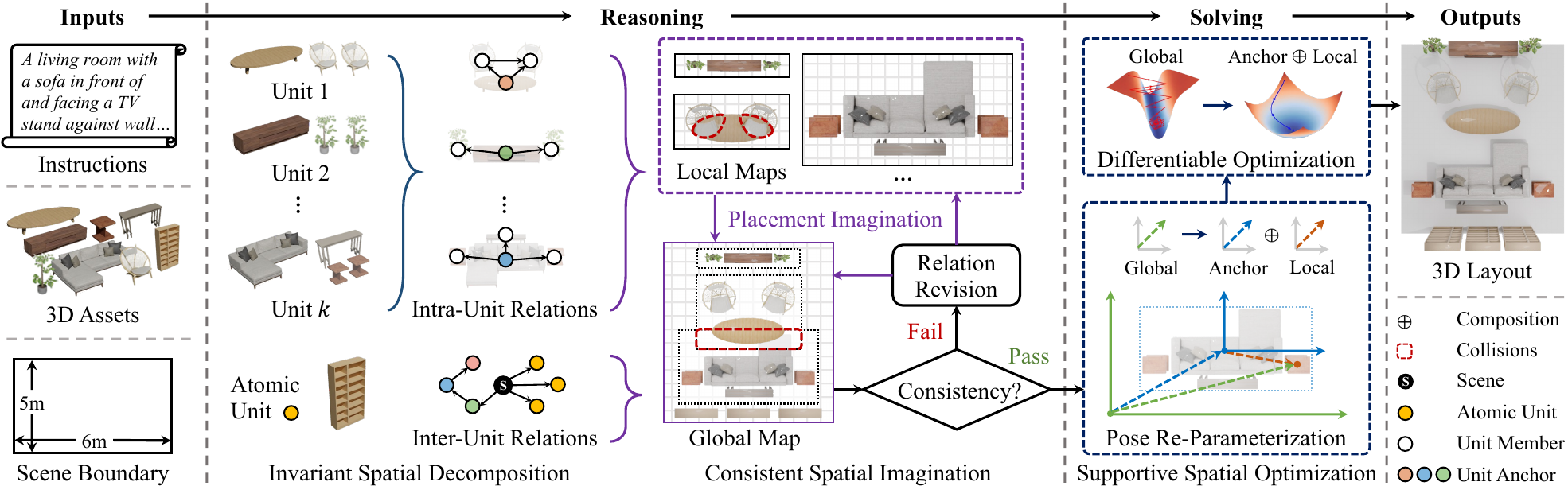}
    \caption{ \textbf{Overview of the \nickname Framework.} Given a language instruction and a set of 3D assets, \textbf{invariant spatial decomposition} partitions assets into frame-invariant units and generates intra-unit and inter-unit relations. Next, \textbf{consistent spatial imagination} runs an imagine-and-revise loop to promote self-consistency of the inferred relations. Finally, \textbf{supportive spatial optimization} applies global-to-local pose re-parameterization to stabilize optimization and outputs the final 3D layout. }
    \label{fig:pipeline}
\end{figure*}

\section{Related Work}
\label{sec:literature}

\subsection{Spatial Reasoning in MLLMs}

Spatial reasoning, the ability to perceive and manipulate spatial relationships in the 3D world, is fundamental for embodied agents \cite{driess2023palm, kim2024openvla} and world models \cite{liu2024world, zhen20243d}. 
Powered by modern LLMs \cite{achiam2023gpt, bai2023qwen, guo2025deepseek, team2023gemini, touvron2023llama, touvron2023llama2} and strong vision encoders \cite{radford2021learning, caron2021emerging, oquab2023dinov2}, MLLMs have achieved impressive visual understanding \cite{hurst2024gpt, liu2023visual, bai2025qwen3vltechnicalreport, comanici2025gemini} but still struggle with spatial reasoning \cite{yang2025thinking, jia2025omnispatial, wang2025spatial457, yin2025spatial, yang2025mmsi}. 
To bridge this gap, recent works construct spatial datasets and finetune MLLMs to improve spatial understanding and reasoning \cite{chen2024spatialvlm, cheng2024spatialrgpt, ma2024spatialpin, cai2025spatialbot, qi2025sofar}. 
In parallel, another line of research elicits more explicit intermediate representations, such as visualization-of-thought \cite{li2025imagine}, textual cognitive maps \cite{ouyang2025spacer}, explicit 3D representations \cite{ma2025spatialreasoner}, and visual drawing \cite{wu2025reinforcing}. 
In this work, we propose a framework to realize MLLMs' potential for relative spatial reasoning to infer semantically consistent and physically feasible relative spatial relations for 3D layout generation. 



\subsection{Direct 3D Layout Generation}

Recent advances in MLLMs have made open-vocabulary 3D layout generation practical. 
A straightforward paradigm is prompting MLLMs to directly output numerical object positions and orientations. 
LayoutGPT \cite{feng2023layoutgpt} leverages in-context learning by retrieving similar layouts from databases and formatting them as CSS-style demonstrations to compose new 3D layouts, while often producing physically invalid layouts. 
To improve layout quality, subsequent works train or align MLLMs for direct layout generation and editing through supervised fine-tuning \cite{bucher2025respace, yang2024llplace} and preference alignment \cite{yang2025llm, ran2025direct}. 
Despite these advances, such approaches are typically built on curated 3D layout corpora such as 3D-FRONT \cite{fu20213d}, which limits generalization to novel scene types and open-vocabulary instructions. 
More recent works use agentic or tool-augmented frameworks that iteratively refine layouts using semantic and physical feedback \cite{berdoz2025text, yang2025sceneweaver, xia2026sage, pfaff2026scenesmith}. 
However, these frameworks often rely on an explicit trial-and-error: they begin from physically infeasible drafts and repeatedly invoke external feedback tools for repair, which can be iteration-heavy and unstable.
In contrast, we integrate an implicit feedback mechanism into the MLLM's reasoning process, enabling self-revision during relation inference so that local inconsistencies are corrected before they propagate globally and the resulting relations are more reliable and consistent in a single pass.

\subsection{Constraint-driven 3D Layout Optimization}

Another paradigm first infers relative spatial relations and then instantiates object poses to satisfy them. 
HOLODECK \cite{yang2024holodeck} pioneers this direction by generating spatial relations with MLLMs and solving them with a DFS-based solver. 
Subsequent works extend this pipeline with multi-agent architectures \cite{ccelen2024design}, richer constraint libraries \cite{raistrick2024infinigen, tam2024scenemotifcoder, aguina2024open}, multimodal conditioning \cite{zhou2025roomcraft}, global-local tree search \cite{deng2025global}, and hierarchical decomposition \cite{wang2025hlg, pun2025hsm, el2025scanedit}. 
However, grid-based DFS solvers restrict the solution space to discrete candidates, which simplifies feasibility checking but often sacrifices semantic fidelity, resulting in physically valid but semantically inconsistent layouts.
LayoutVLM \cite{sun2025layoutvlm} addresses this by introducing a layout representation that combines numerical pose estimates with spatial relations and refining it via differentiable optimization to better balance semantic coherence and physical plausibility. 
Nevertheless, it remains sensitive to initialization and upstream relation-inference errors. 
In parallel, several approaches use generated images or videos as intermediate supervision and lift them back to 3D \cite{ling2025scenethesis, huang2025video, bian2025holodeck, wang2024architect}, improving visual realism but trading off precise instruction following and controllability. 
Crucially, most prior works resolve inconsistent or infeasible relations via post-hoc heuristics, such as grid-map discretization and pruning conflicting relations \cite{yang2024holodeck, sun2025layoutvlm}, which can distort the original semantics. 
In contrast, we address relation-reasoning errors during the MLLM's relative spatial reasoning and propose targeted improvements that yield relations that are semantically consistent and physically feasible.

\section{Method}
\label{sec:method}

\subsection{Problem Formulation}

We study instruction-driven 3D layout generation, which places a set of 3D assets in a space according to natural-language instructions. 
Formally, given an instruction $I$, a space with known size $(L, W, H)$, and a set of $N$ 3D assets $\mathcal{A}=\{a_i\}_{i=1}^N$, the goal is to generate a 3D layout that satisfies the instruction while remaining physically feasible.

Following previous works \cite{yang2024holodeck, sun2025layoutvlm}, we use assets from Objaverse \cite{deitke2023objaverse}. 
We assume that each object is upright and is only allowed to rotate around $+z$ axis. 
We use GPT-4o \cite{hurst2024gpt} to annotate each asset with a short textual description, and determine its front-facing orientation from four canonical rendered views. 
The object size $(l_i, w_i, h_i)$ is defined as the dimensions of its axis-aligned bounding box after rotating the asset to face $+y$ axis. 
The output layout $S$ specifies the object pose of each asset $p_i=(x_i, y_i, z_i, \theta_i)$, where $(x_i, y_i, z_i)\in\mathbb{R}^3$ denotes the object center position and $\theta_i\in\mathbb{R}$ denotes the rotation around $+z$ axis.

Notably, we restrict our study to floor objects to isolate the core challenge of relative spatial reasoning. 
Accordingly, we predict only the planar pose of each asset, $p_i=(x_i, y_i, \theta_i)$, and fix its height as $z_i = h_i/2$. 
Extending the pipeline to wall-mounted or small tabletop objects would mainly require additional attachment or support relations \cite{pun2025hsm} and is beyond the scope of this paper.

Our framework, \textbf{\nickname}, as illustrated in \cref{fig:pipeline}, consists of two stages: \textit{reasoning} and \textit{solving}. 
Given an instruction, a 3D space and a set of 3D assets, we use an MLLM to infer a set of metricized relative spatial relations among assets. 
During the \textit{reasoning} stage, we introduce two key modules, \textbf{invariant spatial decomposition} and \textbf{consistent spatial imagination}, to improve the semantic consistency and physical feasibility of the inferred relations in \cref{sec:decomposition,sec:imagination}.
During the \textit{solving} stage, we translate these relations into differentiable constraints and optimize them via \textbf{supportive spatial optimization} in \cref{sec:optimization}.

\subsection{Invariant Spatial Decomposition}
\label{sec:decomposition}

As the number of assets and relations grows, the MLLM often struggles to maintain consistent intermediate states, leading to confusion and accumulated reasoning errors. 
To improve tractability, semantic grouping \cite{sun2025layoutvlm} has been introduced to group semantically related assets and generate relations group by group. 
However, despite reducing the problem scale, semantic grouping remains insufficient to effectively mitigate error accumulation, because it does not reduce the number of reference-frame transformations required for multi-hop relation inference.

To address this, we propose \textbf{invariant spatial decomposition}, which partitions a scene into a set of frame-invariant units such that the intra-unit relative configuration is preserved under reference-frame transformations. 
Each unit is defined by an anchor asset and its member assets. 
This decomposition factorizes relations into \textit{intra-unit} relations that specify relative spatial relations within each unit and \textit{inter-unit} relations that capture relations among unit anchors and the scene. 
As a result, when inferring relations involving a unit member, the MLLM can reason locally in the anchor-defined frame via member-to-anchor transformations, while the anchor-to-global transformations are handled separately by the inter-unit relations. 
This reduces repeated global reference-frame transformations along multi-hop chains for unit members, thereby curbing semantic drift.

Formally, we partition $N$ assets $\mathcal{A}=\{a_i\}_{i=1}^N$ into $K$ frame-invariant units $\mathcal{U}=\{U_k\}_{k=1}^K$ via an assignment function:
\begin{equation}
\pi:\{1,\dots,N\}\rightarrow\{0,1,\dots,K\}
\end{equation}
where $\pi(i)=0$ denotes independent asset and $\pi(i)=k$ indicates $a_i\in U_k$.
We obtain $\pi$ by prompting the MLLM to identify strongly coupled spatial components and maintain their internal relative configuration as rigid units during subsequent reasoning. 
For each unit $U_k$, we choose an anchor $a_k^\text{anchor} \in U_k$ to define a unit-local frame for intra-unit reasoning. 
Once formed, each frame-invariant unit is treated as a rigid assembly and reason only over its global pose $P_k=(x_k,y_k,\theta_k)$. 
Each member asset $a_i\in U_k$ has a local pose $p_{i,k}$, and its global pose is obtained by:
\begin{equation}
\label{eq:transform}
p_i = P_{\pi(i)} \oplus p_{i,\pi(i)},\quad \forall\, \pi(i)\neq 0, 
\end{equation}
where $\oplus$ denotes planar rigid transformation composition. 
We further denote independent assets as $\mathcal{A}^{0}=\{a_i\in\mathcal{A}\mid \pi(i)=0\}$. 
All $a_i \in \mathcal{A}^0$ are considered as atomic (\textit{i.e.}, anchor-only) units, with poses $\{p_i\}_{a_i\in \mathcal{A}^0}$ defined in the global frame. 
Let $\bar{\mathcal{U}} = \mathcal{U} ~\cup~\mathcal{A}^0$ denote this full unit set.

After all units are established, we generate relative spatial relations at two levels. 
First, the MLLM generates a set of intra-unit relations $\mathcal{R}^{\text{intra}}_{k}$ for each $U_k\in \mathcal{U}$, which specify relations among assets within its unit-local frame. 
Then, the MLLM generates a set of inter-unit relations $\mathcal{R}^{\text{inter}}$ which govern unit placement in the scene: 
$\mathcal{R}^\text{inter} \subseteq\bar{\mathcal{U}} \times (\bar{\mathcal{U}} \cup \{s\})$, where $s$ denotes the scene.
This gives the full relation set:
\begin{equation}
\mathcal{R}=\left(\bigcup_{k=1}^{K}\mathcal{R}^{\text{intra}}_{k}\right)\cup \mathcal{R}^{\text{inter}}.
\end{equation}

From a graph perspective, our two-level relation generation induces a directed relation graph $G=(V,E)$. 
The node set is $V=\mathcal{A}\cup\{s\}$.
Each relation $r\in\mathcal{R}$ corresponds to a directed edge $(u\!\rightarrow\!v)\in E$, following a reference-to-target convention (\textit{i.e.}, the pose of $v$ is constrained in the reference frame of $u$). 
Crucially, our decomposition introduces a \textit{vertex cut} formed by unit anchors, which factorizes $G$ into local intra-unit subgraphs and an inter-unit graph defined over $\bar{\mathcal{U}}$ and $s$. 
Concretely, the edge set $E$ is decomposed as:
\begin{equation}
    E=\Big(\bigcup_{k=1}^{K}E_k^{\text{intra}}\Big)\ \cup\ E^{\text{inter}},
\end{equation}
where $E_k^{\text{intra}}$ and $E^{\text{inter}}$ are induced by $\mathcal{R}_k^{\text{intra}}$ and $\mathcal{R}^{\text{inter}}$.

We define the number of reference-frame transformations along a reasoning chain as the number of intermediate reference-frame switches along its directed path:
\begin{equation}
\label{eq:path_length_T}
\mathcal{T}_\text{path}(\gamma)=m-1,\quad \gamma=(v_0\!\rightarrow \cdots \!\rightarrow v_m).
\end{equation}
Under the reference-to-target convention, each intermediate node introduces one reference-frame switch, so $\mathcal{T}_\text{path}(\gamma)$ counts the total number of reference-frame transformations along $\gamma$.
With our vertex-cut factorization, inferring relations involving a unit member starts from its unit anchor rather than the scene node $s$, avoiding the shared prefix $s \rightarrow a_k^\text{anchor}$ that would otherwise be traversed repeatedly for every member. 
Thus, the cumulative number of reference-frame transformations in member-related inference is reduced. 
A formal statement is deferred to Appendix~\ref{app:graph}.

\subsection{Consistent Spatial Imagination}
\label{sec:imagination}

Inferring reliable metric displacements in metricized relations is intrinsically difficult for MLLMs, since it requires an implicit global space allocation of all assets, rather than independent pairwise guesses. 
Without an explicit spatial representation to verify these numeric offsets, MLLMs can produce displacements that are locally plausible yet globally incompatible.
This issue is exacerbated under multi-hop reasoning, where implied displacements are composed across changing reference frames. 
As a result, small per-hop errors can accumulate into metric drift, leading to inconsistent spacing, collisions, and physically infeasible layouts.

To address this, we propose \textbf{consistent spatial imagination} to promote self-consistency of the predicted metricized relations. 
Motivated by \cite{tolman1948cognitive, yang2025thinking}, we construct global-local cognitive maps based on our frame-invariant units: local maps stabilize intra-unit configurations and the global map allocates space among units and independent assets. 
Leveraging these maps, we guide the MLLM to perform an \textit{imagine-and-revise} loop: it imagines object placements in both the unit-level local frame and the global frame, detects geometric violations (\textit{e.g.}, collisions), and iteratively revises the relations responsible for these conflicts. 
This process grounds metricized relations in explicit scene geometry, suppressing metric drift and yielding more feasible and self-consistent relations.

We implement global-local cognitive maps as explicit spatial representations that the MLLM is required to externalize during spatial reasoning. 
Formally, for each frame-invariant unit $U_k$, the MLLM estimates local poses $q_{i,k}$ under the unit-local map from intra-unit relations $\mathcal{R}^{\text{intra}}_{k}$ and the poses of each unit $Q_k$ together with the poses of independent assets $\{q_i\}_{a_i\in \mathcal{A}^0}$ under the global map from inter-unit relations $\mathcal{R}^{\text{inter}}$. 
The cognitive maps are defined as:
\begin{equation}
\begin{aligned}
\mathcal{M}^{\text{local}}_k &=\{q_{i,k}\}_{a_i\in \mathcal{A}(U_k)}, \\
\mathcal{M}^{\text{global}} &=\{Q_k\}_{U_k\in\mathcal{U}}\cup \{q_i\}_{a_i\in \mathcal{A}^0}.
\end{aligned}
\end{equation}
where the global pose $q_i$ for each $a_i\in U_{\pi(i)}$ is induced by $Q_{\pi(i)}$ and $q_{i,\pi(i)}$ via the same rigid transformation composition as in \cref{eq:transform}.
For an asset, $(l_i,w_i)$ is its canonical planar size, while for a unit, $(l_i,w_i)$ is defined as the planar size of the unit's axis-aligned bounding box. 
To facilitate rotation-aware reasoning and collision detection, we additionally require the MLLM to derive two auxiliary geometric fields.
First, it computes the axis-aligned extents of the yaw-rotated planar footprint, $(e_i^x(\theta_i), e_i^y(\theta_i))$, as
\begin{equation}
\begin{aligned}
e_i^x(\theta_i) &= |l_i\cos\theta_i| + |w_i\sin\theta_i|, \\
e_i^y(\theta_i) &= |l_i\sin\theta_i| + |w_i\cos\theta_i|.
\end{aligned}
\end{equation}
Second, it derives the axis-aligned bounds induced by footprint extents $(e_i^x(\theta_i), e_i^y(\theta_i))$ and center position $(x_i,y_i)$,
\begin{equation}
\begin{aligned}
B_i^x &=\Big[x_i-\tfrac{1}{2}e_i^x(\theta_i),\ x_i+\tfrac{1}{2}e_i^x(\theta_i)\Big], \\
B_i^y &=\Big[y_i-\tfrac{1}{2}e_i^y(\theta_i),\ y_i+\tfrac{1}{2}e_i^y(\theta_i)\Big],
\end{aligned}
\end{equation}
where $B_i=(B_i^x,B_i^y)$.
Together, these poses, footprint extents $(e_i^x,e_i^y)$, and bounds $B_i$ form an explicit spatial representation that supports lightweight consistency checks at both local and global levels. 
In particular, two assets or units $i\neq j$ are considered colliding if their axis-aligned bounds have positive overlap on both axes:
\begin{equation}
\mathrm{Collide}(i,j)\ \Longleftrightarrow\
|B_i^x \cap B_j^x|>0\ \wedge\ |B_i^y \cap B_j^y|>0,
\end{equation}
where $|\cdot|$ denotes the interval length. 
Note that this bound-based consistency check serves only as a lightweight reasoning proxy for global simulation. 
When unit overlaps are ambiguous under this proxy, the MLLM can further refer to member-level bounds for refinement.

The local-to-global imagine-and-revise loop is carried out implicitly within the MLLM’s reasoning to promote self-consistency.
At iteration $t$, the MLLM internally instantiates cognitive maps $\mathcal{M}^{\text{local},(t)}_k\cup \mathcal{M}^{\text{global},(t)}$ from the current relation set $\mathcal{R}^{(t)}$, evaluates collisions via $\mathrm{Collide}(\cdot,\cdot)$, and performs localized revision, which edits intra-unit relations for intra-unit collisions and inter-unit relations for inter-unit collisions. 
The MLLM then obtains the updated relation set $\mathcal{R}^{(t+1)}$ and repeats the simulation until no collisions are detected or the model exhausts its internal reasoning budget.

\subsection{Supportive Spatial Optimization}
\label{sec:optimization}

Given the inferred metricized relative relations, we follow \cite{sun2025layoutvlm} to translate them into differentiable constraints and optimize object poses. 
However, jointly optimizing all objects is often unstable and converges slowly. 
The key reason is that constraints are unevenly distributed across objects: some objects are constrained by many others, while some are constrained by only a few. 
Moving a highly constrained object changes many constraint terms simultaneously, so fixing one violation can trigger new ones elsewhere, causing oscillatory updates and slow convergence.

Prior work alleviates this issue by optimizing group by group \cite{sun2025layoutvlm}, which simplifies optimization but makes early placements difficult to revise. 
In contrast, we design \textbf{supportive spatial optimization} that stabilizes optimization via global-to-local pose re-parameterization, while allowing the full scene to be jointly optimized.

Specifically, we optimize object poses in a mixed representation $\tilde{p}$ via global-to-local re-parameterization. 
Independent assets are optimized in the global frame with poses $p_i=(x_i,y_i,\theta_i)$.
For each unit $U_k$, we optimize a unit-level pose $P_k=(x_k,y_k,\theta_k)$ in the global frame, while optimizing each member in the unit-local frame using a local pose $p_i^{\ell}=(x_i^{\ell},y_i^{\ell},\theta_i^{\ell})$ instead of a global pose. 
Its global pose is recovered from $(P_k, p_i^{\ell})$ via \cref{eq:transform}. 
This re-parameterization decouples intra-unit gradients from unit poses (Proposition \ref{prop:intra_independent}), thereby simplifying optimization. 
We provide a structural analysis in Appendix~\ref{app:conditioning}.

Let $\tilde{p}$ denote independent asset poses, unit poses $\{P_k\}_{k=1}^{K}$, and member local poses $\{p_i^{\ell}\mid \pi(i)\neq 0\}$.
We instantiate each relation $r\in\mathcal{R}$ as a differentiable penalty $\ell(r;\tilde{p})$ that is zero when the relation is satisfied and increases with the violation magnitude. 
$\mathcal{L}_{\text{rel}}^{k}$ and $\mathcal{L}_{\text{rel}}^{\text{global}}$ are obtained by summing $\ell(r;\tilde p)$ over relations in $\mathcal{R}_k^{\text{intra}}$ and $\mathcal{R}^{\text{inter}}$.

\textbf{Local term.} 
For each unit $U_k$, $\mathcal{L}_{\text{local}}^{k}$ is defined in its unit-local frame, consisting of (i) member-level collision losses, and (ii) intra-unit relational losses induced by $\mathcal{R}^{\text{intra}}_{k}$:
\begin{equation}
\mathcal{L}_{\text{local}}^{k}(\tilde p)=\lambda_{\text{col}}\mathcal{L}_{\text{col}}^{k}(\tilde p)+\lambda_{\text{rel}}\mathcal{L}_{\text{rel}}^{k}(\tilde p)
\end{equation}
where $\lambda_{\text{col}}$ and $\lambda_{\text{rel}}$ are scalar weights.

\textbf{Global term.} 
$\mathcal{L}_{\text{global}}$ evaluates inter-unit constraints among independent assets and units. 
We represent each unit $U_k$ as an axis-aligned bounding box (AABB) that tightly encloses all member boxes in the unit frame, and transform it by the unit pose $P_k$ into an oriented bounding box (OBB). 
We then apply (i) boundary losses to keep independent assets and unit OBBs inside the room, (ii) collision losses among independent objects and unit OBBs, and (iii) inter-entity relational losses induced by $\mathcal{R}^{\text{inter}}$: 
\begin{equation}
\label{eq:global_loss}
\mathcal{L}_{\text{global}}(\tilde p)=\lambda_{\text{bd}}\mathcal{L}_{\text{bd}}(\tilde p)+\lambda_{\text{col}}\mathcal{L}_{\text{col}}^{\text{global}}(\tilde p)+\lambda_{\text{rel}}\mathcal{L}_{\text{rel}}^{\text{global}}(\tilde p)
\end{equation}
where $\lambda_{\text{bd}}$, $\lambda_{\text{col}}$, and $\lambda_{\text{rel}}$ are scalar weights.

Overall, we minimize a two-level objective over $\tilde{p}$:
\begin{equation}
\label{eq:optimization_objective}
\mathcal{L}(\tilde{p})=\mathcal{L}_{\text{global}}(\tilde{p})+\sum_{k=1}^{K}\mathcal{L}_{\text{local}}^{k}(\tilde{p}).
\end{equation}

\section{Experiments}
\label{sec:experiment}

\begin{figure*}[t]
    \centering
    \includegraphics[width=2.0\columnwidth]{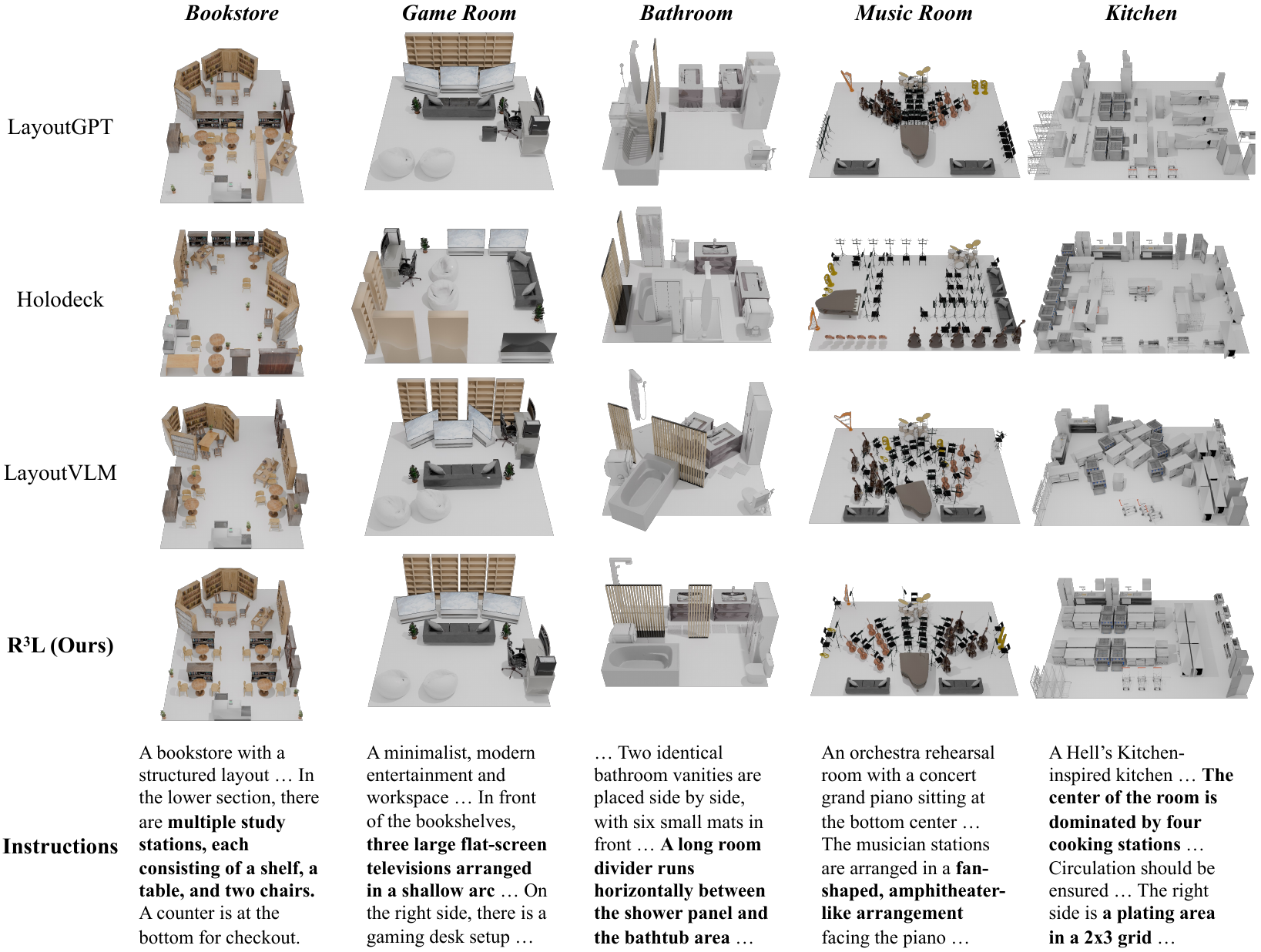}
    \caption{ Qualitative comparison between \nickname and existing methods for generating layouts based on language instructions. \nickname demonstrates strong instruction-following ability while maintaining physically plausible and visually coherent layouts. }
    \label{fig:qualitative}
\end{figure*}

\begin{figure*}[!t]
    \centering
    \includegraphics[width=2.0\columnwidth]{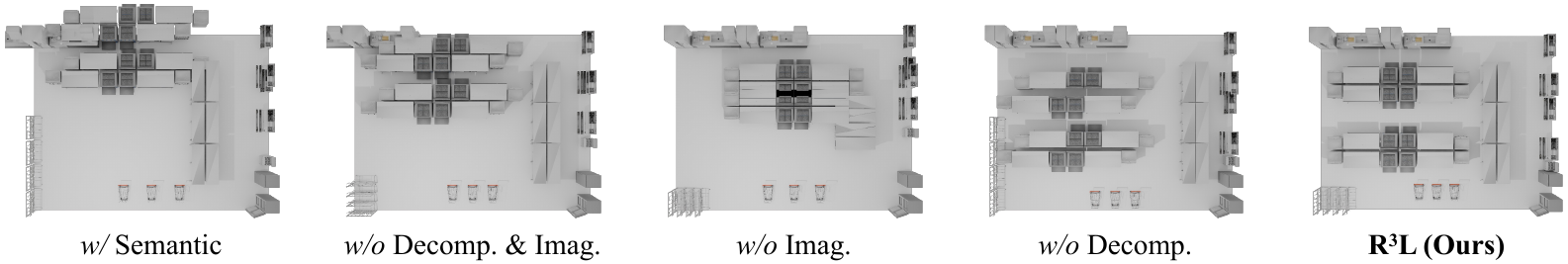}
    \caption{ Visual ablation of Semantic, Decomp., and Imag. Physical losses are disabled to isolate the effect of relation inference. }
    \label{fig:ablation}
\end{figure*}

In our experiments, we seek to answer the following questions: 
\textbf{Q1:} How does \nickname compare with existing instruction-driven 3D layout generation baselines? 
\textbf{Q2:} How essential are invariant spatial decomposition and consistent spatial imagination for inferring reliable and consistent metricized relative spatial relations?
\textbf{Q3:} How does supportive spatial optimization affect optimization stability and convergence speed over successive optimization steps?

\textbf{Settings.} 
We evaluate \nickname against existing methods under open-vocabulary instructions across diverse scene types. 
Following \cite{sun2025layoutvlm, yang2025sceneweaver}, we create test cases across 9 scene types with 3 scenes per type, and design instructions at three difficulty levels, from easy to hard. 
Each test case contains up to 40 floor-standing assets preprocessed from Objaverse \cite{deitke2023objaverse}. 
All methods are evaluated on the same instructions and preprocessed assets. 
We use GPT-5 in all experiments. 
Details of test case generation are provided in Appendix~\ref{sec:test-case}.

\textbf{Baselines.} 
We compare \nickname with three recent approaches for instruction-driven 3D layout generation: LayoutGPT \cite{feng2023layoutgpt}, Holodeck \cite{yang2024holodeck}, and LayoutVLM \cite{sun2025layoutvlm}. 
LayoutGPT directly predicts absolute object poses via in-context examples. 
Holodeck first generates relational constraints and then solves them with a DFS-based solver. 
LayoutVLM predicts numerical object poses with spatial relations, and refines the layout via differentiable optimization. 
Since LayoutGPT was originally limited to bedrooms and living rooms, we adapt it to open-vocabulary instructions for a fair comparison.

\textbf{Metrics.} 
Following \cite{sun2025layoutvlm, yang2025sceneweaver, ling2025scenethesis}, we evaluate the generated 3D layouts using both physical and semantic metrics. 
For physical evaluation, we report the collision ratio (\%CR) and the out-of-bound ratio (\%OR), defined as the percentage of objects involved in any collision and the percentage of out-of-bound objects, where lower values indicate fewer physical violations.
For semantic evaluation, we measure realism (Real.) and functionality (Func.), instruction following (Instr.), where Instr. measures how well the layout matches the instruction. 
Following \cite{ccelen2024design, ling2025scenethesis}, we adopt Gemini 3 Flash as an LLM-based evaluator: given the instruction and two renderings of the generated scene (top-down and side views), it scores each semantic metric from 1 to 10, where higher is better. 
For pair-wise comparisons, we report the win rate and the Elo rating, where higher indicates stronger overall preference.


\subsection{Evaluation on 3D Layout Generation}
\label{sec:evaluation}

\begin{table*}[t]
\centering
\caption{ Quantitative results for instruction-driven 3D layout generation across nine scene types. Lower is better for \%CR/\%OR, and higher is better for Real./Func./Instr. \nickname achieves the best overall physical feasibility and semantic performance. }
\begingroup
\Huge
\resizebox{1.0\linewidth}{!}{
\begin{tabular}{c ccccc ccccc ccccc ccccc ccccc}

\toprule[2pt]
\multirow{2}[2]{*}{Method} & \multicolumn{5}{c}{\textbf{Bathroom}} & \multicolumn{5}{c}{\textbf{Bedroom}} & \multicolumn{5}{c}{\textbf{Bookstore}} & \multicolumn{5}{c}{\textbf{Game Room}} & \multicolumn{5}{c}{\textbf{Gym}} \\
\cmidrule(r){2-6} \cmidrule(r){7-11} \cmidrule(r){12-16} \cmidrule(r){17-21} \cmidrule(r){22-26}
 & {\huge \%CR} & {\huge \%OR} & {\huge Real.} & {\huge Func.} & {\huge Instr.} & {\huge \%CR} & {\huge \%OR} & {\huge Real.} & {\huge Func.} & {\huge Instr.} & {\huge \%CR} & {\huge \%OR} & {\huge Real.} & {\huge Func.} & {\huge Instr.} & {\huge \%CR} & {\huge \%OR} & {\huge Real.} & {\huge Func.} & {\huge Instr.} & {\huge \%CR} & {\huge \%OR} & {\huge Real.} & {\huge Func.} & {\huge Instr.} \\
\midrule[1.5pt]
LayoutGPT & 7.6 & 12.1 & 5.9 & 5.3 & 7.9 & 8.7 & \textbf{0.0} & 4.4 & 3.9 & 7.7 & 2.9 & 2.1 & 7.5 & 7.0 & 8.1 & 5.1 & 17.5 & 5.1 & 5.1 & 8.4 & 7.4 & 25.0 & 6.5 & 6.3 & 7.3 \\
Holodeck & 4.0 & \textbf{0.0} & 2.9 & 2.3 & 1.9 & \textbf{0.0} & \textbf{0.0} & 4.5 & 4.0 & 3.9 & 0.6 & \textbf{0.0} & 4.5 & 3.7 & 3.1 & 0.6 & \textbf{0.0} & 3.9 & 3.0 & 2.1 & 1.7 & 1.7 & 5.9 & 5.6 & 5.9 \\
LayoutVLM & 3.0 & 13.2 & 3.5 & 3.5 & 4.7 & 0.3 & 6.8 & 6.4 & 5.9 & 7.3 & 1.1 & 7.3 & 3.4 & 4.3 & 5.5 & 0.1 & 7.7 & 6.3 & 5.4 & \textbf{8.7} & 0.1 & 29.4 & 4.8 & 5.6 & 6.7 \\
\textbf{\nickname (Ours)} & \textbf{0.0} & \textbf{0.0} & \textbf{7.5} & \textbf{7.5} & \textbf{9.4} & \textbf{0.0} & \textbf{0.0} & \textbf{6.9} & \textbf{6.5} & \textbf{7.9} & \textbf{0.0} & \textbf{0.0} & \textbf{8.9} & \textbf{8.9} & \textbf{8.9} & \textbf{0.0} & \textbf{0.0} & \textbf{7.3} & \textbf{6.5} & \textbf{8.7} & \textbf{0.0} & \textbf{0.0} & \textbf{8.6} & \textbf{8.0} & \textbf{9.7} \\
\midrule[2pt]
\multirow{2}[2]{*}{Method} & \multicolumn{5}{c}{\textbf{Kitchen}} & \multicolumn{5}{c}{\textbf{Living Room}} & \multicolumn{5}{c}{\textbf{Music Room}} & \multicolumn{5}{c}{\textbf{Restaurant}} & \multicolumn{5}{c}{\cellcolor{gray!20} \textbf{Average}} \\
\cmidrule(r){2-6} \cmidrule(r){7-11} \cmidrule(r){12-16} \cmidrule(r){17-21} \cmidrule(r){22-26}
 & {\huge \%CR} & {\huge \%OR} & {\huge Real.} & {\huge Func.} & {\huge Instr.} & {\huge \%CR} & {\huge \%OR} & {\huge Real.} & {\huge Func.} & {\huge Instr.} & {\huge \%CR} & {\huge \%OR} & {\huge Real.} & {\huge Func.} & {\huge Instr.} & {\huge \%CR} & {\huge \%OR} & {\huge Real.} & {\huge Func.} & {\huge Instr.} & {\huge \%CR} & {\huge \%OR} & {\huge Real.} & {\huge Func.} & {\huge Instr.} \\
\midrule[1.5pt]
LayoutGPT & 1.8 & 7.0 & 6.5 & 6.5 & 8.2 & 1.6 & \textbf{0.0} & 5.2 & 4.7 & 7.9 & 1.7 & 0.5 & 6.8 & 6.5 & \textbf{8.9} & 1.8 & 8.4 & 5.0 & 4.5 & 8.1 & 4.3 & 8.1 & 5.9 & 5.5 & 8.1 \\
Holodeck & 1.5 & 0.8 & 4.0 & 3.0 & 2.5 & 2.8 & 2.1 & 3.1 & 2.7 & 2.7 & 0.7 & 2.4 & 3.9 & 2.6 & 2.1 & 0.1 & \textbf{0.0} & 2.9 & 2.5 & 2.7 & 1.3 & 0.8 & 4.0 & 3.3 & 3.0 \\
LayoutVLM & 0.5 & \textbf{0.0} & 5.5 & 5.1 & 6.9 & 0.1 & 16.9 & 5.4 & 5.0 & 8.0 & \textbf{0.1} & 14.7 & 4.4 & 4.8 & 7.3 & 0.1 & 2.4 & 3.9 & 3.5 & 6.4 & 0.6 & 10.9 & 4.9 & 4.8 & 6.9 \\
\textbf{\nickname (Ours)} & \textbf{0.0} & \textbf{0.0} & \textbf{8.7} & \textbf{8.3} & \textbf{9.5} & \textbf{0.0} & \textbf{0.0} & \textbf{7.8} & \textbf{7.1} & \textbf{9.6} & 0.4 & \textbf{0.0} & \textbf{7.9} & \textbf{7.7} & 8.6 & \textbf{0.0} & \textbf{0.0} & \textbf{7.5} & \textbf{7.0} & \textbf{9.1} & \textbf{0.0} & \textbf{0.0} & \textbf{7.9} & \textbf{7.5} & \textbf{9.1} \\

\bottomrule[2pt]

\end{tabular}}
\endgroup
\label{tab:quantitative}
\end{table*}

We present the quantitative results in \cref{tab:quantitative}. 
\nickname achieves the strongest semantic performance while maintaining physical feasibility, consistently outperforming all baselines across the nine scene types.
Compared with LayoutGPT, the strongest semantic baseline, \nickname improves layout coherence by 2.0 and instruction following by 1.0, while reducing the average collision and out-of-bound ratios from 4.3\% and 8.1\% to 0.0\% and 0.0\%, respectively.
This reflects LayoutGPT's limitation: directly predicting absolute object poses often yields semantically plausible but physically invalid layouts.
Compared with Holodeck, \nickname matches its strong physical feasibility while substantially improving semantic performance, including a $+$6.1 gain in instruction following.
This stems from Holodeck's reliance on coarse relation modeling and grid-based solving, which favor feasibility but often weaken semantic fidelity.
LayoutVLM better balances semantics and feasibility through differentiable optimization, but remains sensitive to inaccurate or conflicting inferred relations, leading to lower semantic scores.
Pairwise comparisons in \cref{tab:pairwise} further demonstrate that \nickname is preferred in over 85\% of scenes when compared with the other baselines and achieves the highest Elo rating. 
Together, these results highlight the effectiveness of \nickname in generating layouts that are both physically feasible and semantically consistent, and faithful to the instructions.

We also provide qualitative comparisons in \cref{fig:qualitative}.
\nickname demonstrates strong instruction-following ability while maintaining physically plausible and visually coherent layouts. 
For example, in the \textit{Bookstore} case, only \nickname correctly captures the requirement of ``multiple study stations'', whereas other methods either fail to satisfy this requirement or produce physically implausible layouts.

\begin{table}[t]
\centering
\caption{ Pairwise comparison results among \nickname and the other baselines. Win rates show the number of tasks in which the row method is preferred over the column method across 27 tasks. Higher Elo scores indicate stronger overall performance. }
\resizebox{\linewidth}{!}{
\begin{tabular}{c cccc c}

\toprule

\multirow{2}{*}{Method} & \multicolumn{4}{c}{Win Rates} & \multirow{2}{*}{Elo $\uparrow$} \\
\cmidrule(){2-5}

 & LayoutGPT & Holodeck & LayoutVLM & \nickname &  \\

\midrule

LayoutGPT & -- & 25/27 & 21/27 & 4/27 & 1597 \\
Holodeck & 2/27 & -- & 4/27 & 0/27 & 1117 \\
LayoutVLM & 6/27 & 23/27 & -- & 3/27 & 1419 \\
\textbf{\nickname (Ours)} & \textbf{23/27} & \textbf{27/27} & \textbf{24/27} & -- & \textbf{1866} \\

\bottomrule

\end{tabular}}
\label{tab:pairwise}
\end{table}

\subsection{Additional Analyses}

\textbf{Ablation Studies.} 
We ablate the two key modules of \nickname: invariant spatial decomposition (Decomp.) and consistent spatial imagination (Imag.).
We also compare against semantic grouping (Semantic) \cite{sun2025layoutvlm}. 
Specifically, we consider four variants: 1) \textit{w/} Semantic, 2) \textit{w/o} Decomp. \& Imag., 3) \textit{w/o} Imag., and 4) \textit{w/o} Decomp.
To isolate the effect of relation inference, we optimize layouts using only relational losses from the inferred metricized relations, with physical losses disabled to avoid confounding effects.

\begin{table}
\centering
\caption{ \textbf{Ablation Studies.} We ablate the two key modules of \nickname. We also compare against semantic grouping \cite{sun2025layoutvlm}. For fair comparison, we disable physical losses so that layouts are determined solely by the inferred metricized relations. }
\resizebox{\linewidth}{!}{
\begin{tabular}{c ccccc}

\toprule

Method
& { \%CR } 
& { \%OR }
& { Real.} 
& { Func. }
& { Instr.} \\

\midrule

\textit{w/} Semantic & 3.2 & 9.7 & 5.3 & 5.1 & 7.5  \\
\textit{w/o} Decomp. \& Imag. & 3.1 & 7.9 & 5.7 & 5.4 & 7.8 \\
\textit{w/o} Imag. & 3.1 & 6.3 & 6.2 & 6.0 & 7.9 \\
\textit{w/o} Decomp. & 1.7 & 3.8 & 6.9 & 6.7 & 8.5 \\
\textbf{\nickname (Ours)} & \textbf{1.0} & \textbf{1.6} & \textbf{8.0} & \textbf{7.3} & \textbf{9.1} \\

\bottomrule

\end{tabular}}
\label{tab:ablation}
\end{table}

As shown in \cref{tab:ablation,fig:ablation}, removing both Decomp. and Imag. leads to a substantial drop in both semantic quality and physical plausibility.
Semantic grouping performs worst overall, suggesting that a purely semantics-driven decomposition is insufficient to stabilize relative spatial reasoning.
Using Decomp. without Imag. improves semantic consistency, but often yields physically implausible layouts because metric composition errors remain uncorrected. 
Conversely, using Imag. without Decomp. improves physical plausibility, but degrades semantic consistency due to frequent reference-frame transformations in long relation chains.
Overall, Decomp. and Imag. are complementary: Decomp. reduces reference-frame transformations and stabilizes multi-hop semantics, while Imag. promotes self-consistency and suppresses metric drift. 
Combining both yields the strongest overall performance.

\begin{figure}[t]
    \centering
    \includegraphics[width=1.0\columnwidth]{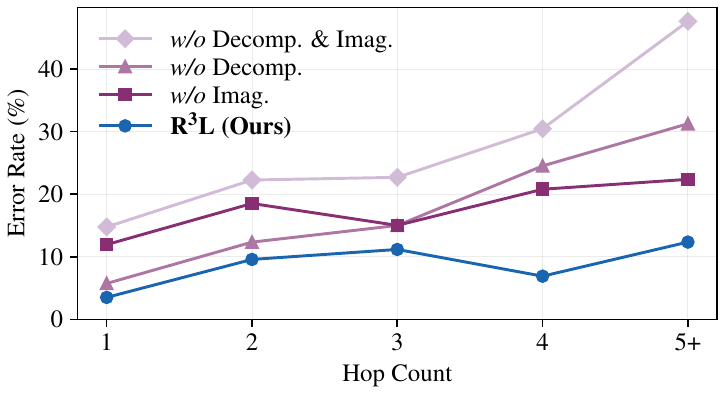}
    \caption{\textbf{Placement Error Rate by Hop Count.} 
    A larger hop count implies more reference-frame transformations along the inferred relation chain. Decomposition flattens the growth of error with hop count, while imagination lowers the overall error level. Combining both yields the lowest error rates across all hops.
    }
    \label{fig:hop-error}
\end{figure}

\begin{figure}[t]
    \centering
    \includegraphics[width=1.0\columnwidth]{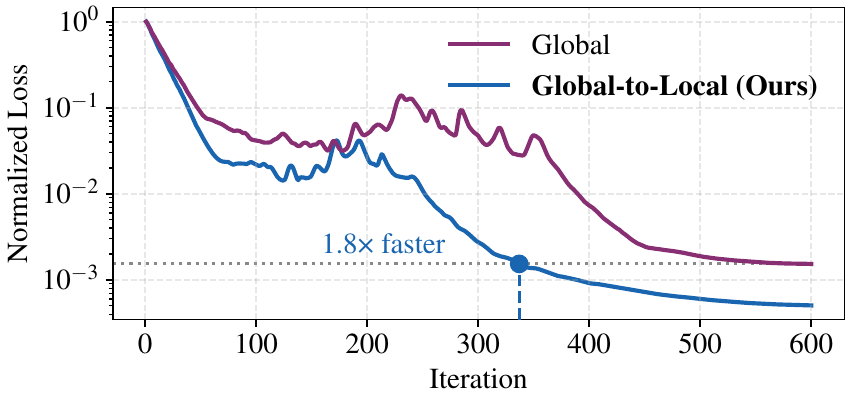}
    \caption{ Comparison of convergence curves for global and global-to-local optimization under the same optimizer settings. Our global-to-local optimization converges faster and maintains lower loss throughout most of the optimization. }
    \label{fig:optimizer}
\end{figure}

\textbf{Placement Error Rate by Hop Count.} 
To directly evaluate the effect of reference-frame transformations on relative spatial reasoning, we analyze placement errors with respect to hop count.
As shown in \cref{fig:hop-error}, the error rate increases substantially with hop count when both modules are removed. 
Adding decomposition flattens this growth trend by reducing reference-frame transformations, while imagination lowers the overall error level by improving global consistency.
Combining both, the full model achieves consistently low error rates across all hop counts, indicating reduced error accumulation under long relation chains.
The detailed results are provided in \cref{tab:hop-error-rates}.

\textbf{User Study.} 
To further evaluate the quality of scenes generated by \nickname and assess the agreement between Gemini 3 Flash ratings and human judgments, we conduct a user study with 150 participants. 
As reported in \cref{tab:human}, \nickname achieves the best performance across all dimensions.
We also conduct a pairwise preference study, in which participants choose between scenes generated by \nickname and each baseline.
As summarized in \cref{tab:pairwise_user}, \nickname is preferred in more than 90\% of comparisons. 
Overall, the human study corroborates our automatic evaluation and confirms that \nickname produces visually plausible and semantically consistent scenes.
Full study details are provided in Appendix~\ref{app:user}.

\textbf{Human-LLM Alignment.}
As shown in \cref{tab:alignment}, we compute Kendall’s $\tau$-b between rankings induced by human judgments and automatic scores. 
The correlations are positive across all three semantic metrics and close to the User-User agreement levels, indicating that LLM-based evaluation is broadly consistent with human judgment.

\textbf{Convergence Speed.} 
As shown in \cref{fig:optimizer}, our supportive spatial optimization (\textit{i.e.}, global-to-local optimization) converges faster than standard global optimization under the same settings (\textit{e.g.}, learning rates), achieving an average 1.8$\times$ speedup and maintaining lower loss throughout most of the optimization.
This indicates that our global-to-local re-parameterization improves optimization efficiency by reducing gradient coupling during updates.
The complete set of convergence curves is provided in \cref{fig:loss_grid}.

\begin{table}[!t]
\centering
\caption{ User study results on semantic scores. Entries are reported as mean $\pm 2$ standard errors of the mean (SEM). }
\begingroup
\setlength{\tabcolsep}{18pt}
\resizebox{\linewidth}{!}{
\begin{tabular}{c ccc}

\toprule

Method
& { Real.}
& { Func. }
& { Instr.} \\

\midrule

LayoutGPT & 4.1 $\pm$ 0.4 & 4.1 $\pm$ 0.4 & 5.7 $\pm$ 0.4 \\
Holodeck & 5.3 $\pm$ 0.3 & 5.2 $\pm$ 0.3 & 4.3 $\pm$ 0.3 \\
LayoutVLM & 2.8 $\pm$ 0.3 & 3.3 $\pm$ 0.3 & 5.0 $\pm$ 0.4 \\
\textbf{\nickname (Ours)} & \textbf{6.7 $\pm$ 0.3} & \textbf{6.5 $\pm$ 0.3} & \textbf{7.9 $\pm$ 0.3} \\

\bottomrule

\end{tabular}}
\endgroup
\label{tab:human}
\end{table}

\begin{table}[!t]
\centering
\caption{ User study results of pairwise comparisons among all methods. Some win counts do not sum to 27 because ties are allowed. Elo ratings are computed over all non-tied pairs. }
\resizebox{\linewidth}{!}{
\begin{tabular}{c cccc c}
\toprule
\multirow{2}{*}{Method} & \multicolumn{4}{c}{Win Rates} & \multirow{2}{*}{Elo $\uparrow$} \\
\cmidrule(){2-5}
 & LayoutGPT & Holodeck & LayoutVLM & \nickname &  \\
\midrule
LayoutGPT & -- & 21/27 & 21/27 & 1/27 & 1511 \\
Holodeck & 6/27 & -- & 11/27 & 1/27 & 1276 \\
LayoutVLM & 4/27 & 15/27 & -- & 2/27 & 1308 \\
\textbf{\nickname (Ours)} & \textbf{25/27} & \textbf{26/27} & \textbf{25/27} & -- & \textbf{1905} \\
\bottomrule
\end{tabular}}
\label{tab:pairwise_user}
\end{table}

\begin{table}[!t]
\centering
\fontsize{8.2pt}{9pt}\selectfont
\caption{ \textbf{Human-LLM Alignment.} Kendall’s $\tau$-b between rankings induced by human judgments and automatic scores. User-User denotes inter-human agreement, while User-LLM denotes human-LLM agreement. Higher values indicate stronger agreement. }
\label{tab:alignment}
\setlength{\tabcolsep}{10pt}
\begin{tabular}{cccc}
\toprule[0.6pt]
Method & Real.\,$\uparrow$ & Func.\,$\uparrow$ & Instr.\,$\uparrow$ \\
\midrule[0.4pt]
User-User & \textbf{0.56} & \textbf{0.55} & 0.53 \\
User-LLM  & 0.52 & 0.50 & \textbf{0.56} \\
\bottomrule[0.6pt]
\end{tabular}
\end{table}

\section{Conclusion}

In this paper, we propose \nickname, a general framework that improves the reliability and consistency of multi-hop relative spatial reasoning for 3D layout generation.
To mitigate semantic and metric drift, we present invariant spatial decomposition to reduce reference-frame transformations, consistent spatial imagination to promote self-consistent relation inference, and supportive spatial optimization to improve optimization efficiency while preserving global coherence.
Experiments show that \nickname consistently produces physically feasible, semantically consistent, and instruction-faithful layouts.
Future work will extend \nickname to broader spatial reasoning tasks beyond 3D layout generation.


\section*{Acknowledgments} This work was jointly supported by the Young Scientists Fund of the Research Grants Council of Hong Kong (25206524, 15212925) and the National Natural Science Foundation of China (42301520).


\section*{Impact Statement}
This paper presents work whose goal is to advance the field of Machine Learning. There are many potential societal consequences of our work, none of which we feel must be specifically highlighted here.


\bibliography{main}
\bibliographystyle{icml2026}

\clearpage

\appendix
The appendix includes:

\begin{itemize}[
    leftmargin=*,
    topsep=2pt,
    partopsep=0pt,
    itemsep=1pt,
    parsep=0pt
]
    \item Proof of Reference Frame Shift Reduction. \hyperref[app:graph]{A}
    \item Structural Analysis of Gradient Decoupling. \hyperref[app:conditioning]{B}
    \item Details of Invariant Spatial Decomposition. \hyperref[app:decomposition]{C}
    \item Details of Consistent Spatial Imagination. \hyperref[app:imagination]{D}
    \item Details of Supportive Spatial Optimization. \hyperref[app:optimization]{E}
    \item Details of Differentiable Spatial Constraints. \hyperref[app:constraints]{F}
    \item Details of Evaluation. \hyperref[sec:test-case]{G}
    \item Details of User Study. \hyperref[app:user]{H}
    \item Failure Cases. \hyperref[app:failure]{I}
    \item Limitations and Future Work. \hyperref[app:limitations]{J}
    \item Additional Experimental Results. \hyperref[app:additional]{K}
\end{itemize}

\vspace{0.2cm}
\begin{center}
\captionsetup{hypcap=false}
\captionof{table}{Definitions of notation used in Appendices \ref{app:graph} and \ref{app:conditioning}}
\label{tab:app_ab_notation}
\begin{tabular}{p{0.205\columnwidth}p{0.69\columnwidth}}
\hline
\noalign{\vskip 2pt}
\textbf{Notation} & \textbf{Meaning} \\
\noalign{\vskip 2pt}
\hline
\noalign{\vskip 2pt}
$G=(V,E)$ & Relation graph w/ nodes $V$ and edges $E$. \\
$s$ & Scene-level reference node. \\
$d(u,v)$ & Directed shortest-path length $u\rightarrow v$.\\
$U_k$ & The $k$-th spatial unit. \\
$a_k$ & Anchor asset of unit $U_k$. \\
$M_k$ & Non-anchor members of $U_k$. \\
$v\in M_k$ & A non-anchor member. \\
$|M_k|$ & Number of non-anchor members. \\
$P_k$ & Global pose of unit $U_k$. \\
$a_i,a_j$ & Assets in the same unit. \\
$p_i,p_j$ & Global poses of assets $a_i,a_j$. \\
$p_i^\ell,p_j^\ell$ & Local poses in the unit frame. \\
$\oplus$ & Pose composition. \\
$p_i^{-1}\oplus p_j$ & Relative pose from $a_i$ to $a_j$. \\
$\tilde{p}$ & Mixed global--local poses. \\
$r$ & Spatial relation constraint. \\
$\ell(r;\,\tilde{p})$ & Penalty for relation $r$. \\
$\varphi(\cdot)$ & Relation-specific penalty. \\
$\mathcal R_k^{\text{intra}}$ & Set of intra-unit relations involving $U_k$. \\
$\mathcal R_k^{\text{inter}}$ & Set of inter-unit relations involving $U_k$. \\
$n_k^{\text{intra}}$ & Number of intra-unit losses involving $U_k$ \\
$n_k^{\text{inter}}$ & Number of inter-unit losses involving $U_k$\\
$\nabla_{P_k}$ & Gradient w.r.t. unit pose $P_k$. \\
$\beta$ & Fixed per-edge smoothness constant. \\
\noalign{\vskip 2pt}
\hline
\end{tabular}
\end{center}

\section{Proof of Reference Frame Shift Reduction}
\label{app:graph}
We formalize the claim used in \cref{sec:decomposition}: when a scene graph is factorized through a cut vertex, it strictly reduces the total number of reference frame shifts during multi-hop reasoning. The result is definitional, but we include it nonetheless to make the reduction factor explicit.

\textbf{Setup.}
Following the notations in \cref{sec:decomposition}, the minimum number of frame shifts required from $u$ to $v$ is
\begin{equation}
T^*(u,v) = \min_{\gamma:\, u \to v} T_{\text{path}}(\gamma) = d(u,v) - 1, \quad u \neq v
\end{equation}

where $d(u,v)$ denotes the shortest directed path length from $u\rightarrow v$. Therefore, the baseline cumulative cost for reasoning about all nodes from the scene node $s$ is: 
\begin{equation}
\text{Cost}(G) = \sum_{v \in V} T^*(s,v)
\end{equation}

\begin{proposition}[Anchor-local reasoning reduces total transformations]
\label{thm:anchor_split_clean}
For a frame-invariant unit $U_k$ with anchor asset $a_k^{\text{anchor}}$, let $M_k = U_k \setminus\{a_k^\text{anchor}\}$ be a nonempty set of its members. 
Suppose that every directed path from $s$ to $v \in M_k$ passes through $a_k^{\text{anchor}}$ (i.e., $a_k^\text{anchor}$ is a cut vertex of $G$). Then, by reasoning about all such members from $a_k^{\text{anchor}}$ instead of $s$ using Invariant Spatial Decomposition, the cumulative cost becomes
\begin{equation}
\mathrm{Cost}'(G) =
\sum_{v\in V\setminus M_k}T^*(s,v) +
\sum_{v\in M_k}T^*(a_k^{\mathrm{anchor}},v)
\label{eq:cost_drop_clean}
\end{equation}
and the frame shift reduction $\Delta$ is precisely
\begin{equation}
\Delta = \mathrm{Cost}(G)-\mathrm{Cost}'(G)
= |M_k|\cdot d(s,a_k^{\mathrm{anchor}})
\end{equation}
\end{proposition}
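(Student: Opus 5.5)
The plan is to expand both costs termwise and reduce the claim to an additivity identity for shortest-path distances through the cut vertex. Write $a := a_k^{\mathrm{anchor}}$. Since $\mathrm{Cost}(G)$ and $\mathrm{Cost}'(G)$ in \eqref{eq:cost_drop_clean} share every summand indexed by $v\in V\setminus M_k$, those terms cancel, and
\begin{equation*}
\Delta=\sum_{v\in M_k}\bigl(T^*(s,v)-T^*(a,v)\bigr).
\end{equation*}
For $v\in M_k$ we have $v\neq s$ and $v\neq a$, so the setup formula $T^*(u,v)=d(u,v)-1$ applies to both terms. The $-1$'s cancel, leaving
\begin{equation*}
\Delta=\sum_{v\in M_k}\bigl(d(s,v)-d(a,v)\bigr).
\end{equation*}
I will assume, as implicit in the definition of $\mathrm{Cost}(G)$, that each $v\in M_k$ is reachable from $s$, so all distances are finite.

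The key step, and the only nontrivial one, is to show that
\begin{equation*}
d(s,v)=d(s,a)+d(a,v)\quad\text{for every } v\in M_k .
\end{equation*}
The two inequalities come from different facts.

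For $\le$, concatenate a shortest directed path $s\to a$ with a shortest directed path $a\to v$. This gives a directed walk from $s$ to $v$ of length $d(s,a)+d(a,v)$. A shortest path is never longer than any walk, so the bound follows. This direction uses no hypothesis beyond the existence of these paths.

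For $\ge$, I use the cut-vertex hypothesis. Take a shortest path $\gamma$ from $s$ to $v$. By assumption it passes through $a$, so split $\gamma$ at its visit to $a$. The prefix is a directed $s\to a$ path, hence has length at least $d(s,a)$. The suffix is a directed $a\to v$ path, hence has length at least $d(a,v)$. In particular $d(a,v)$ is finite, so the anchor-local terms in $\mathrm{Cost}'(G)$ are well defined. Summing the two bounds gives $d(s,v)\ge d(s,a)+d(a,v)$.

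Substituting the identity gives $d(s,v)-d(a,v)=d(s,a)$ for each of the $|M_k|$ members, so $\Delta=|M_k|\cdot d(s,a_k^{\mathrm{anchor}})$. For strictness, note that $a\neq s$ gives $d(s,a)\ge 1$, and $M_k$ is nonempty by hypothesis. Hence $\Delta\ge 1$, which is the strict reduction claimed.

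I expect the main obstacle to be stating the $\ge$ direction cleanly. The hypothesis must be read as applying to directed paths, so that the split at $a$ yields genuine directed $s\to a$ and $a\to v$ paths. The reachability and finiteness assumptions also need to be made explicit, since otherwise the identity and the cancellation of the $-1$'s are not meaningful.
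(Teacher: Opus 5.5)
Your proof is correct and follows the same route as the paper. You cancel the shared $V\setminus M_k$ terms and the $-1$'s, then use the cut-vertex hypothesis to get $d(s,v)=d(s,a_k^{\mathrm{anchor}})+d(a_k^{\mathrm{anchor}},v)$ for each member. Your two inequalities (concatenation for $\le$, splitting a shortest path at the anchor for $\ge$) spell out what the paper compresses into saying a shortest path ``decomposes at'' the anchor, and your reachability and finiteness remarks make explicit assumptions the paper leaves implicit.
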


\begin{proof}
Fix any $v\in M_k$.
Since every directed path from s to v passes through $a_k^\text{anchor}$, any shortest path from $s\rightarrow v$ decomposes at $a_k^\text{anchor}$, giving
\begin{equation}
d(s,v)
= d(s,a_k^{\mathrm{anchor}})
+ d(a_k^{\mathrm{anchor}},v)
\end{equation}

Therefore, the reduction $\Delta$ is
\begin{equation}
\begin{aligned}
\Delta &= \mathrm{Cost}(G)-\mathrm{Cost}'(G) \\
&= \sum_{v\in M_k}\Bigl(T^*(s,v)-T^*(a_k^{\mathrm{anchor}},v)\Bigr) \\
&= \sum_{v\in M_k}\Bigl(
(d(s,v)-1)-(d(a_k^{\mathrm{anchor}},v)-1)
\Bigr) \\
&= \sum_{v\in M_k} d(s,a_k^{\mathrm{anchor}}) \\
&= |M_k|\cdot d(s,a_k^{\mathrm{anchor}})
\end{aligned}
\end{equation}

Since $a_k^\text{anchor}\ne s$ and reachable from $s$, $d(s, a_k^\text{anchor})\ge 1$. Along with $|M_k|\ge 1$, the reduction is strict. 
\end{proof}

\subsection{Discussion}
Every node $v \in M_k$ lies behind the same anchor $a_k^\text{anchor}$ in the relation graph. Without our decomposition, reasoning about every such member $v$ requires repeatedly traversing the same shared prefix from $s \rightarrow a_k^\text{anchor}$. Frame-invariant reasoning removes this redundant prefix, saving exactly $d(s, a_k^\text{anchor})$ frame shifts per member. 



\section{Structural Analysis of Gradient Decoupling}
\label{app:conditioning}

This section provides a structural analysis of why the global-to-local re-parameterization introduced in \cref{sec:optimization} improves optimization stability, as empirically observed in \cref{fig:optimizer}. The key observation is that 
even if each individual constraint term is well-behaved, graph topology can amplify loss stiffness when a variable participates in many terms. Our re-parameterization removes a specific high-degree curvature accumulation source. 

\subsection{Motivation: Topological Gradient Coupling}
\label{app:conditioning_motivation}

Our optimization objective in \cref{eq:optimization_objective} sums differentiable relational penalties over the relation set $\mathcal{R}$

\begin{equation}
    \mathcal{L}(\tilde{p}) = \sum_{r\in \mathcal{R}} \ell(r; \tilde{p})
\end{equation}

where $\ell(r; \tilde{p})$ is the penalty for relation $r$, $\tilde{p}$ denotes the unified pose representation defined in \cref{sec:optimization}, comprising independent asset poses $\{p_i\}_{a_i \in A_0}$, unit poses $\{P_k\}_{k=1}^K$, and member local poses $\{p_i^\ell\, |\, \pi(i) \ne 0\}$. The conditioning of said objective (i.e., how rapidly the gradient of $\mathcal{L}$ can change when a single pose is perturbed) governs the stability and convergence speed of our optimization. Two factors determine this conditioning:

\textbf{(i) Loss Function Design.}
Given any fixed pair of nodes $(u, v)$ on the relation graph, the penalty $\ell(r;\, \tilde{p})$ can be linear, quadratic, hinge-type, or any other differentiable form. These choices control the \emph{per-edge stiffness} of our objective: how rapidly the gradient of a \emph{single} edge penalty term can change (i.e., the smoothness constant).

\textbf{(ii) Topological Coupling.} The relation graph's topology specifies which poses participate in which loss terms. When a single pose participates in many edges (e.g., an anchor connected to many members), the stiffness contributed by those edges can \emph{accumulate} along certain directions, creating ill-conditioning even if the individual penalty terms are well-behaved.

\textbf{Which to isolate.} The global-to-local re-parameterization technique proposed in \ref{sec:optimization} is purely topological: it does not alter the functional form of any individual penalty $\ell(r;\,\tilde{p})$; instead, it changes which pose variables participate in which penalty terms. Our global-to-local re-parameterization benefits optimization through improving factor (ii), not (i). 

\textbf{How to isolate.} To isolate the topological effect in this analysis, we must fix a uniform per-edge stiffness budget. Suppose that each penalty $\ell(r;\, \tilde{p})$ has block-gradient Lipschitz constant $\beta$ with respect to every pose block on which it depends (i.e., block-$\beta$-smooth).  Then, given any pose block $\tilde{p}_u$ (i.e., $p_i$, $P_k$, or $p_i^\ell$) touched by $\ell(r;\, \tilde{p})$ and any perturbation $\delta$ supported on that block

\begin{equation}
\|\nabla_{\tilde{p}_u} \ell(r;\, \tilde{p} + \delta) - \nabla_{\tilde{p}_u} \ell(r;\, \tilde{p})\|_2 \leq \beta \|\delta\|_2
\end{equation}

where $\beta$ is independent of graph degree. If pose block $\tilde{p}_u$ participates in $n$ penalty terms, then by triangle inequality, the block smoothness constant of $\mathcal{L}$ with respect to $\tilde{p}_u$ is at most $n\beta$. In other words, the block-wise smoothness upper bound grows linearly with the number of incident relations.

Under naive global parameterization, a unit anchor $a_k^\text{anchor}$ may participate in both $n_k^\text{intra}$ intra-unit relations with its members (from $\mathcal{R}_k^\text{intra}$) and $n_k^\text{inter}$ inter-unit relations (from $\mathcal{R}^\text{inter}$). Therefore, the block-smoothness constant of the anchor pose scales as $(n_k^\text{intra} + n^\text{inter}_k)\cdot\beta$. We show that this linear scaling is attainable via a star graph example, which is a frequently observed substructure in practice. 

\textbf{Tightness via a star graph.} Consider a star constraint graph with a root $x_0$ and $M$ leaf nodes $\{x_1, \dots, x_M\}$, as well as quadratic penalties $\mathcal{L}_\star = \frac{1}{2}\sum_{i=1}^M \|(x_i - x_0) - d_i\|^2$, where $d_i$ are target distance between the root and the leaf nodes. Replacing $x_0$ by $x_0+\delta$ shifts $\nabla\mathcal{L}_\star$ by $\sum_{i=1}^M\delta=M\delta$. The smoothness bound $n\beta$ is thus achieved with equality.

Intuitively, this illustrates how updating an anchor to satisfy one relation simultaneously perturbs all its other incident constraints, which can lead to oscillatory updates and slow convergence.

\subsection{Re-parameterization Removes Intra-Unit Topological Coupling}
\label{app:reparam_removes}

A key property of intra-unit relations is that they depend only on relative poses among members within the same unit, not on the global placement of the entire unit. The global-to-local re-parameterization exploits this invariance. 

Formally, for a unit $U_k$ with global pose $P_k$ and member local poses ${p_i^\ell}$, the member global pose is $p_i = P_k \oplus p_i^\ell$ (Eq. \eqref{eq:transform}). Thus, for any two members $a_i, a_j \in U_k$, their relative transformation satisfies the standard cancellation identity: 
\begin{equation}
    \label{eq:unit_pose_cancel}
    \underbrace{(P_k \oplus p_i^\ell)^{-1} \oplus (P_k \oplus p_j^\ell)}_{i \to j \ \text{in global frame}} = \underbrace{(p_i^\ell)^{-1} \oplus p_j^\ell}_{i \to j \ \text{in unit frame}}
\end{equation}

\begin{proposition}[Intra-unit gradient cancellation]
\label{prop:intra_independent}
Let
\begin{equation}
    \ell(r;\, \tilde{p}) = \varphi\bigl(p_i^{-1} \oplus p_j\bigr)
\end{equation}
be the differentiable intra-unit penalty between members $a_i, a_j \in U_k$ of the same unit, where $r \in \mathcal{R}_k^\text{intra}$ is their intra-unit relation, and $\varphi$ is an arbitrary differentiable function. Then, under the global-to-local re-parameterization
\begin{equation}
    p_i = P_k \oplus p_i^\ell,
    \quad
    p_j = P_k \oplus p_j^\ell
\end{equation}
we have
\begin{equation}
    \nabla_{P_k}\, \ell(r; \tilde{p}) = 0
\end{equation}

\end{proposition}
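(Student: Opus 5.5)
The approach is to show that, after the re-parameterization, the penalty $\ell(r;\tilde p)$ does not depend on $P_k$ at all. Its gradient with respect to $P_k$ then vanishes identically, and no derivative of $\varphi$ ever has to be computed.

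The first step is to substitute $p_i = P_k \oplus p_i^\ell$ and $p_j = P_k \oplus p_j^\ell$ into the argument of $\varphi$. The cancellation identity \eqref{eq:unit_pose_cancel} then gives
\[
\ell(r;\tilde p) = \varphi\bigl((P_k\oplus p_i^\ell)^{-1}\oplus(P_k\oplus p_j^\ell)\bigr) = \varphi\bigl((p_i^\ell)^{-1}\oplus p_j^\ell\bigr).
\]

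To justify \eqref{eq:unit_pose_cancel} rather than just cite it, I would use the group structure of planar rigid motions $SE(2)$, with $\oplus$ as the group operation. In that group $(P\oplus q)^{-1} = q^{-1}\oplus P^{-1}$. Associativity then gives
\[
(p_i^\ell)^{-1}\oplus P_k^{-1}\oplus P_k\oplus p_j^\ell = (p_i^\ell)^{-1}\oplus p_j^\ell.
\]
One subtlety needs care here, and I expect it to be the main technical point. Poses are stored as $(x,y,\theta)$ with $\theta\in\mathbb R$, so the parameterization is a covering of $SE(2)$ rather than $SE(2)$ itself. I would check that the composed angle equals $\theta_j^\ell - \theta_i^\ell$ exactly, with no wrapping, since the $\theta_k$ terms cancel additively. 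I would also check that the translation part equals $R(-\theta_i^\ell)(t_j^\ell - t_i^\ell)$. To see this, write the global translation difference as $R(\theta_k)(t_j^\ell - t_i^\ell)$ and note that $R(-\theta_i^\ell-\theta_k)R(\theta_k)=R(-\theta_i^\ell)$. This explicit coordinate computation confirms the identity on the actual parameter space.

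With the identity in hand, the map $P_k\mapsto \ell(r;\tilde p)$ is constant, for fixed $p_i^\ell$ and $p_j^\ell$, on all of $\mathbb R^3$. The chain rule then gives $\nabla_{P_k}\ell = J^\top\nabla\varphi$, where $J$ is the Jacobian of $P_k\mapsto (P_k\oplus p_i^\ell)^{-1}\oplus(P_k\oplus p_j^\ell)$. This map is constant in $P_k$, so $J=0$ and $\nabla_{P_k}\ell(r;\tilde p)=0$. The argument needs only that $\varphi$ is differentiable and holds for arbitrary $\varphi$, as claimed.

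I would add a remark on the case where the relation involves the anchor itself. Its local pose is the identity, $p^\ell_{\text{anchor}}=e$, so the same argument applies verbatim.
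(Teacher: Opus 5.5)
Your proposal is correct and follows the paper's proof. You substitute $p_i = P_k\oplus p_i^\ell$ and $p_j = P_k\oplus p_j^\ell$, then use $(P_k\oplus p_i^\ell)^{-1} = (p_i^\ell)^{-1}\oplus P_k^{-1}$ and associativity to cancel $P_k^{-1}\oplus P_k$, so $\ell$ does not depend on $P_k$ and its gradient vanishes. Your explicit coordinate check for real-valued angles, giving angle $\theta_j^\ell-\theta_i^\ell$ and translation $R(-\theta_i^\ell)(t_j^\ell-t_i^\ell)$, is a valid extra verification that the paper omits, but it does not change the argument.
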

\begin{proof}
Substituting re-parameterized poses into the penalty:
\begin{equation}
\begin{aligned}
\ell(r; \tilde{p}) &= \varphi(p_i^{-1} \oplus p_j)\\
&= \varphi\bigl((P_k \oplus p_i^\ell)^{-1} \oplus (P_k \oplus p_j^\ell)\bigr)\\ 
&= \varphi\bigl((p_i^\ell)^{-1} \oplus P_k^{-1} \oplus P_k \oplus p_j^\ell\bigr) \\ 
&= \varphi\bigl((p_i^\ell)^{-1} \oplus p_j^\ell\bigr)
\end{aligned}
\end{equation}
Since $\ell(r; \,\tilde{p})$ is a function of the relative poses $p_i^\ell$ and $p_j^\ell$ alone, without dependence on $P_k$, $\nabla_{P_k} \ell(r;\,\tilde{p}) = 0$.
\end{proof}

\textbf{Consequence.}  Under global parameterization, intra-unit penalties incident to a unit anchor contribute to the anchor's gradient, creating an $O(n^\text{intra}_k)$ stiffness accumulation. Under our global-to-local re-parameterization, on the other hand, all intra-unit penalties satisfying Proposition \ref{prop:intra_independent} contribute zero gradient to the unit pose $P_k$. The block-smoothness bound for $P_k$ is therefore reduced from $(n_k^\text{intra} + n^\text{inter}_k)\cdot \beta$ down to $n^\text{inter}_k\cdot \beta$. The intra-unit relation induced stiffness accumulation is removed by construction. 

\subsection{Discussion}

Proposition \ref{prop:intra_independent} provides a structural guarantee: global-to-local re-parameterization eliminates a specific gradient coupling whose magnitude scales linearly with the number of intra-unit relations $n^\text{intra}_k$. This does not provide convergence rate guarantee, as the overall optimization landscape also depends on a multitude of things, such as the loss geometry, inter-unit interactions, collision and boundary terms, etc.

Nevertheless, removing an $O(n_k^\text{intra})$ accumulation source from each unit is strictly preferable to leaving it in place.  The re-parameterization also introduces no additional hyperparameter, and we observe strong empirical gains as shown by the convergence curves in \cref{fig:optimizer} and \cref{fig:loss_grid}.

\section{Details of Invariant Spatial Decomposition}
\label{app:decomposition}

We implement invariant spatial decomposition by employing the prompt detailed below.
Specifically, we enforce this decomposition \textbf{by construction} by mandating that each unit contains exactly one anchor and that unit members are strictly prohibited from having inter-unit relations.
Any violation of these constraints triggers a failure in our DSL syntactic checker, necessitating a retry in a subsequent API session.
Empirically, such failure modes are rare, with nearly all tasks successfully completed in a single pass.

\promptfile[fontsize=\small]{prompts/decomposition.txt}

\section{Details of Consistent Spatial Imagination}
\label{app:imagination}

We employ the following prompt to implement consistent spatial imagination. 
Specifically, by mandating that the MLLM output the bounding box, oriented footprint, and spatial extent (\textit{i.e}., the $X$ and $Y$ bounds occupied by a unit) in a DSL format, we elicit an explicit externalization of its internal mental model. 
This process compels the MLLM to concretely reason about spatial layouts rather than relying on vague associations. 
Any omission of these geometric primitives triggers a failure in our DSL syntactic checker, resulting in an automated retry via a new API session. 
Empirically, such failures are rare, and nearly all tasks are successfully completed in a single pass.

\promptfile[fontsize=\small]{prompts/imagination.txt}

\section{Details of Supportive Spatial Optimization}
\label{app:optimization}

\subsection{Learnable Parameter Mechanism}
\label{app:param_share}

Our supportive spatial optimization incorporates a \textbf{parameter-sharing mechanism}. 
For relational instances anticipated to exhibit the same metric scale (\textit{e.g.}, symmetric counterparts such as the two chair–table distances in a dining set), we bind them to a single underlying metric parameter during the optimization process.
This is particularly effective when collision losses repel objects: parameter sharing prevents symmetric relation metrics from drifting independently after overlaps are resolved, thereby preserving consistent relational distances across symmetric instances.

Furthermore, we utilize a \textbf{two-stage optimization strategy}. 
In the first stage, we optimize object poses with all constraint parameters fixed and physical constraints (\textit{e.g.}, collision loss) disabled. 
This allows the optimizer to first attain a stable configuration that fulfills all semantic requirements.
In the second stage, we enable learnable constraint parameters and physics constraints, allowing the system to \textit{finetune} the layout from its previous state and resolve physical violations via the shared parameters.

Benefiting from this parameter sharing, adjustments to constraint parameters typically do not result in a significant degradation of layout quality. 
For example, in a grid of tables where each row shares a uniform vertical position parameter, updating the parameter affects the \textit{entire} row simultaneously, preventing individual objects from losing their alignment.
To prevent collapse into trivial layouts (\textit{i.e.}, the optimizer setting parameter values that minimize loss but violate semantic constraints), we add a \textit{prior loss} to regularize the parameters and inhibit arbitrary drift.

\subsection{Optimizer Details}

\begin{table}[h]
\centering
\caption{Fixed set of hyperparameters used by all scenes.}
\label{tab:optimizer-hyperparams}
\begin{tabular}{lc}
\toprule
Parameter & Value \\
\midrule
Optimizer & SGD (momentum = 0.9) \\
Iterations & 600 \\
Learning rate (position) & 0.5 \\
Learning rate (rotation) & 0.3 \\
Scheduler & Cosine annealing \\
Gradient clipping (position) & 1.0 \\
Gradient clipping (rotation) & 0.3 \\
\bottomrule
\end{tabular}
\end{table}

The optimizer operates over three fundamental tensor representations: (i) the \textbf{pose vector} $\mathbf{p} = (\mathbf{x}, \mathbf{y}, \boldsymbol{\theta}) \in \mathbb{R}^{3\times N}$, where $\mathbf{x}_i, \mathbf{y}_i$ denotes the 2D position of object $i$, and $\boldsymbol{\theta}_i$ denotes its rotation (in radians); 
(ii) the \textbf{bounding box vector} $\mathbf{b} = (w_x, w_y) \in \mathbb{R}^{2\times N}$, where $w_x, w_y$ are half-widths along the local axes; and 
(iii) the \textbf{parameter vector} $\boldsymbol{\phi} \in \mathbb{R}^P$ (as described in \ref{app:param_share}).

The bounding box vector $\mathbf{b}$ is fixed and provided by the asset library, while the parameter vector $\boldsymbol{\phi}$ is determined by the MLLM. 
The pose vector $\mathbf{p}$ is randomly initialized at the start of optimization.

We employ a standard Cosine Annealing learning rate schedule with the minimum learning rate $\eta_{min}$ set to $0$. 
Since our pose parameters span several orders of magnitude (\textit{e.g.}, from 10 m $\rightarrow$ 1 cm), this annealing schedule is necessary to prevent saturation. 
We use a standard Stochastic Gradient Descent optimizer with a momentum of 0.9. 
Separate learning rates are applied to position and rotation, and gradient clipping is used to suppress gradient explosions, particularly when random initialization yields a suboptimal starting $\mathbf{p}$. 

In practice, on a single RTX 4090 GPU, a single optimization stage (600 iterations) completes in 15 seconds. 
Consequently, the total runtime for our dual-stage optimization process is roughly 30 seconds per scene.
All qualitative and quantitative results presented in this paper are generated using the fixed hyperparameter set detailed in Table \ref{tab:optimizer-hyperparams}.

\section{Differentiable Spatial Constraints}
We define differentiable objectives for the spatial constraints used in Supportive Spatial Optimization. Following the main text, the pose of asset $a_i$ is $p_i = (x_i, y_i, \theta_i)$, its planar bounding box size is $(l_i, w_i)$, and the room has planar dimensions $(L, W)$. The orientation vector is $\mathbf{v}_i = (\cos\theta_i, \sin\theta_i)$. For unit $U_k$, the unit pose is $P_k = (x_k, y_k, \theta_k)$, and member assets have local poses $p^\ell_i$. We denote $\phi(z) = \max(0, z)^2$ as the soft penalty, $R(\theta)$ for the 2D rotation matrix.

\textbf{Overall Objective.}
We optimize the mixed representation $\tilde{p}$:
\begin{equation}
\mathcal{L}(\tilde{p}) = \mathcal{L}_{\text{global}}(\tilde{p}) + \sum_{k=1}^{K} \mathcal{L}^k_{\text{local}}(\tilde{p}),
\end{equation}
where global and local terms include collision, boundary, and relational losses with weights $\lambda_{\text{col}}, \lambda_{\text{bd}}, \lambda_{\text{rel}}$.

\textbf{Collision Loss.}
\begin{equation}
\mathcal{L}_{\text{col}} = \sum_{i<j} \left[ \mathrm{IoU}_{ij} - \frac{d_{ij}^2}{c^2} \, \rho_{ij} \right],
\end{equation}
where $d_{ij}$ is the center distance, $c$ the diagonal of the smallest enclosing box, and $\rho_{ij} = |B_i \cap B_j|/\min(|B_i|,|B_j|)$ the overlap ratio that smoothly modulates the distance penalty.

Notably, we added a gating term $\rho_{ij}$. In traditional DIoU, the distance penalty would repel \emph{all} box pairs regardless of collision, producing overly sparse layouts in scenes where tight packing is desired. By activating the penalty only in proportion to the actual overlap, the loss resolves collisions while preserving spatial compactness when needed.

\textbf{Boundary Loss.}
\begin{equation}
\mathcal{L}_{\text{bd}} = \sum_i\sum_{k=1}^4 \left\| c_{i,k} - \mathrm{clamp}(c_{i,k}, \mathbf{0}, \mathbf{u}) \right\|_1
\end{equation}
where $c_{i,k} \in \mathbb{R}^2$ are the four corners of asset $a_i$ after rotation and $\mathbf{u} = (L, W)^\top$ is the room boundary ($L$ is length and $W$ is width). Intuitively, this is the sum $L_1$ distances by which the four corners of the asset's OBB exceed the room boundary.

\textbf{Distance Loss.}
\begin{equation}
\mathcal{L}_{\text{dist}}(p_i, p_j, d^*) = \left( \|(x_i, y_i) - (x_j, y_j)\|_2 - d^* \right)^2.
\end{equation}
where $d^*$ is the targeted center-to-center distance between assets $i$ and $j$ that we want to achieve.

\textbf{Gap Loss.}
\begin{equation}
\mathcal{L}_{\text{gap}}(B_i, B_j, g) = \bigl(D_{\min}(B_i, B_j) - g\bigr)^2,
\end{equation}
where $B_i = (p_i, s_i, \theta_i)$ denotes an OBB with center $p_i \in \mathbb{R}^2$, 
half-extents $s_i \in \mathbb{R}^2$, and orientation $\theta_i$,
and $D_{\min}$ is the minimum boundary distance approximated via vertex-to-OBB signed distance functions. 
This enforces a minimum clearance $g$ between asset $a_i$ and $a_j$'s bounding boxes.

\textbf{Against-Wall Loss.}
\begin{equation}
\begin{aligned}
  \mathcal{L}_{\mathrm{wall}}(p_i, w) 
  &= (p_i^{(w)} - t_w)^2\\ &+ 1 - \cos(\theta_i - \theta_w^*)),
\end{aligned}
\end{equation}
where $p_i^{(w)} \!\in\! \{x_i, y_i\}$ is the coordinate constrained by wall $w$, 
$t_w$ is the target position accounting for object extent, 
$\theta_w^*$ is the orientation perpendicular to $w$.
This loss positions asset~$i$ against wall $w \!\in\! \{\mathrm{L}, \mathrm{R}, \mathrm{T}, \mathrm{B}\}$.

\textbf{Corner Loss.}
\begin{equation}
\begin{aligned}
\mathcal{L}_{\text{corner}}(p_i, c, w) &= (x_i - x_c^*)^2 + (y_i - y_c^*)^2\\
&+ 1 - \cos(\theta_i - \theta_w^*),
\label{eq:corner}
\end{aligned}
\end{equation}
where $c \in \{\text{BL}, \text{BR}, \text{TR}, \text{TL}\}$ specifies the corner and $w$ selects one of its two adjacent walls to place the object against, resolving placement ambiguity. The target position $(x_c^*, y_c^*)$ offsets from corner $c$ toward the room interior by the object's half-extents, with the offset direction determined by $w$. The target orientation $\theta_w^*$ faces perpendicular to wall $w$ into the room. Intuitively, this loss snaps an object into a corner, flush against a designated wall $w$ and facing inward.

\textbf{Facing Constraint.}
\begin{equation}
\mathcal{L}_{\text{face}}(p_i, p_j) = 1 - \frac{\mathbf{v}_i \cdot \mathbf{d}_{ij}}{\|\mathbf{d}_{ij}\|},
\end{equation}
where $\mathbf{v}_i = (\cos\theta_i, \sin\theta_i)$ is the unit orientation vector of asset $a_i$, and $\mathbf{d}_{ij} = (x_j - x_i, y_j - y_i)$ is the displacement from $a_i$ to $a_j$. This rotates asset $a_i$ to face target $a_j$. In practice, it is possible that $\|\mathbf{d}_{i j}\| = 0$; we address this by introducing a small $\epsilon$ constant.

\textbf{Directional Constraints.}
For relative positioning (left-of, right-of, in-front-of, behind-of), we transform source $s$ into target $t$'s local frame:
\begin{equation}
(x', y') = R(\theta_t)^\top (p_s - p_t).
\end{equation}
Each direction applies two terms---a \emph{directional} term $\mathcal{L}_{\text{dir}}$ and an \emph{alignment} term $\mathcal{L}_{\text{align}}$:
\begin{align}
\mathcal{L}_{\text{left}} &= \underbrace{\phi(x' + r_x + e_x)}_{\mathcal{L}_{\text{dir}}} + \underbrace{|y' - \bar{y}|}_{\mathcal{L}_{\text{align}}}, \\
\mathcal{L}_{\text{right}} &= \phi(e_x - x' + r_x) + |y' - \bar{y}|, \\
\mathcal{L}_{\text{front}} &= \phi(e_y - y' + r_y) + |x' - \bar{x}|, \\
\mathcal{L}_{\text{behind}} &= \phi(y' + r_y + e_y) + |x' - \bar{x}|,
\end{align}
where $(x', y')$ is the source center in the target's local frame, $R(\theta_t)$ is the 2D rotation matrix, and $p_s, p_t \in \mathbb{R}^2$ are global positions. The terms $e_x, e_y$ are the target's half-extents, and $r_x, r_y$ are the source's half-extents projected onto the target's local axes. The function $\phi(z) = \max(0, z)$ penalizes violations. The alignment targets are $\bar{y} = (2p-1)(e_y - r_y)$ and $\bar{x} = (2p-1)(e_x - r_x)$ with $p \in [0,1]$.

Intuitively, $\mathcal{L}_{\text{dir}}$ enforces that the source's bounding box lies entirely on the specified side of the target (e.g., left or in-front), while $\mathcal{L}_{\text{align}}$ controls where along the perpendicular axis the source is positioned by smoothly interpolating between edge-aligned placements: $p{=}0$ aligns the source with one edge of the target, $p{=}1$ with the opposite edge, and $p{=}0.5$ yields strict centering.

\textbf{Angle Loss}
\begin{equation}
\mathcal{L}_{\text{angle}}(p_i, p_j, \alpha) = 1 - \cos(\theta_i - \theta_j - \alpha),
\end{equation}
where $\theta_i, \theta_j$ are the yaw angles of assets $a_i, a_j$. This loss enforces a fixed angular offset $\alpha$ between the two assets.

\textbf{Placement Loss.}
\begin{align}
\mathcal{L}_{\text{h-place}}(p_i, x^*) &= \phi\bigl(|x_i - x^*| - mL\bigr),\\
\mathcal{L}_{\text{v-place}}(p_i, y^*) &= \phi\bigl(|y_i - y^*| - mW\bigr),
\end{align}
where $\phi(z) = \max(0, z)^2$ is a soft penalty that activates when the violation exceeds a threshold, $(x^*, y^*)$ is the target position for asset $a_i$, $(L, W)$ are room dimensions, and $m$ is a margin factor allowing slight deviation from the target. In practice, we set $m$ to zero.

\textbf{Around Arrangement.}
For arranging $N$ assets around a focal point with sweep angle $S$:
\begin{equation}
\begin{aligned}
\mathcal{L}_{\text{around}} &= \frac{1}{N-1} \sum_{i=1}^{N-1} \left(\Delta_i - \frac{S}{N-1}\right)^2\\ &+\left\|\bar{\mathbf{u}} - \bar{\mathbf{u}}^*\right\|^2,
\end{aligned}
\end{equation}
where $\Delta_i$ are angular gaps between sorted positions, 
$\bar{\mathbf{u}} = \frac{1}{N}\sum_i(\sin\theta_i, \cos\theta_i)$ is the 
circular mean embedding of source angles in the target's local frame, 
and $\bar{\mathbf{u}}^* = M(\sin c, \cos c)$ with 
$M = \sin(N\delta)/(N\sin\delta)$ for $\delta = S/[2(N-1)]$ is the 
target embedding centered at $c$.

\label{app:constraints}

\section{Details of Evaluation}
\subsection{Details of Test Case Generation}
\label{sec:test-case}

We follow the test case generation protocol of \cite{sun2025layoutvlm}, but do not directly reuse its benchmark because it is not fully aligned with our setting.
Specifically, existing benchmarks often use short, simple instructions, which are insufficient for evaluating compositional relative spatial reasoning.
In addition, our framework focuses on floor objects, while prior benchmarks include various small tabletop or non-floor objects that fall outside our scope.

To address these discrepancies while ensuring a fair comparison, we regenerate and filter the test cases to better match our setting while keeping the evaluation protocol aligned with prior work.
Concretely, for each of the nine scene types, we construct three scenes with predefined room sizes. 
We then utilize GPT-5 to generate instructions across three difficulty levels, accompanied by plausible asset lists. 
The corresponding 3D assets are retrieved from Objaverse \cite{deitke2023objaverse}.
Finally, we performed human verification to remove or replace low-quality assets, such as those with mislabeled categories or low-poly geometry. 
We employ the following prompt for test case generation.

\promptfile[fontsize=\small]{prompts/test_case.txt}

\subsection{Details of Physical Evaluation}
\label{sec:physics-evaluation}

For physical evaluation, we report the collision ratio (\%CR) and out-of-bound ratio (\%OR), defined as the percentage of objects involved in collision or penetrating wall boundaries.

\textbf{Collision Ratio (\%CR).}
Given a scene with $n$ objects represented as 2D bounding polygons $\{P_1, P_2, \ldots, P_n\}$ projected onto the floor plane, we compute pairwise intersections between all $\binom{n}{2}$ object pairs. An object pair $(P_i, P_j)$ is considered \emph{colliding} if their intersection area exceeds a tolerance threshold $\tau_c$:
\begin{equation}
    \text{collide}(P_i, P_j) = \mathbf{1}\left[\, \text{area}(P_i \cap P_j) > \tau_c \,\right]
\end{equation}
We then count the number of unique objects involved in at least one collision:
\begin{equation}
    \mathcal{C} = \left\{ i \mid \exists\, j \neq i : \text{collide}(P_i, P_j) = 1 \right\}
\end{equation}
The collision ratio is defined as:
\begin{equation}
    \%\text{CR} = \frac{|\mathcal{C}|}{n} \times 100
\end{equation}

\textbf{Out-of-Bound Ratio (\%OR).}
Given the room boundary polygon $R$ (derived from the wall vertices), an object $P_i$ is considered \emph{out-of-bound} if it extends beyond the room boundary by more than a tolerance threshold $\tau_o$:
\begin{equation}
    \text{oob}(P_i) = \mathbf{1}\left[\, \text{area}(P_i) - \text{area}(P_i \cap R) > \tau_o \,\right]
\end{equation}
The out-of-bound ratio is defined as:
\begin{equation}
    \%\text{OR} = \frac{\sum_{i=1}^{n} \text{oob}(P_i)}{n} \times 100
\end{equation}

\textbf{Tolerance Margins.}
We employ tolerance margins $\tau_c = 0.0003\,\text{m}^2$ and $\tau_o = 0.0003\,\text{m}^2$ to prevent false positives arising from numerical uncertainty in gradient-based optimization. Methods including LayoutVLM and \nickname use continuous optimization to refine object placements, which can introduce floating-point imprecision at object boundaries. Without these margins, negligible sub-centimeter overlaps (on the order of $10^{-6}\,\text{m}^2$) would be incorrectly flagged as collisions, unfairly penalizing optimization-based methods compared to DFS-based approaches that place objects on discrete grid (\textit{i.e.}, Holodeck) and approaches that output precise coordinates (\textit{i.e.}, LayoutGPT). Since our goal is to assess relative spatial reasoning, neglecting physically insignificant overlaps is a sensible decision.

\textbf{Implementation Details.}
Object bounding polygons are computed from the 3D asset bounding boxes projected onto the 2D floor plane, accounting for object rotation. The room polygons are derived from the convex hull of the end points of the wall segments. All geometric operations (intersection, area computation) are performed using the Shapely library with double-precision floating-point arithmetic.

\subsection{Details of Absolute Semantic Evaluation}
\label{app:abs_sem_eval}

We employ the following prompt to query the MLLM and generate the absolute semantic scores (\textit{i.e.}, Real., Func. and Instr.) presented in \cref{tab:quantitative}. 

\promptfile[fontsize=\small]{prompts/evaluation_abs.txt}

For each task $\times$ baseline combination, we query Gemini 3 Flash five times and average the resulting scores to obtain the final values. 
During each evaluation pass, the model is provided with both the top-down and side-view renderings of the generated scene.

\subsection{Details of Pairwise Semantic Evaluation}
\label{app:llm_pairwise}

We employ the following prompt to query the MLLM to select a preferred layout between two methods, given their outputs on the same task. 

\promptfile[fontsize=\small]{prompts/evaluation_pair.txt}

Similarly to \ref{app:abs_sem_eval}, we query Gemini 3 Flash three times, each time providing the top-down and side-view renderings of two layouts generated by two methods on the same task and asking it to choose a preferred layout. The ordering of the methods (\textit{i.e.}, which method's layout is placed on which side) is randomized at each step. Then, the three results are aggregated via majority voting. 

This protocol yields a win/lose matrix between each pair of methods across all 27 tasks. A Bradley-Terry model is then fit on this task-level win/lose matrix via Sequential Least Squares Programming (SLSQP). Then, the estimated ratings $\hat{r}_i$ are converted to the traditional Elo scale via
\begin{equation}
    \text{Elo}_i = 1500 + \frac{400}{\ln(10)} \cdot \hat{r}_i
\end{equation}
This transformation corresponds a 400-point Elo difference to approximately $10:1$ odds (\textit{i.e.}, $P(m_i \succ m_j) \approx 0.91$ when $\text{Elo}_i - \text{Elo}_j = 400$), consistent with the standard Elo rating system convention.

We fit Bradley-Terry on per-task outcome rather than per-comparison, as our evaluation employs an MLLM-as-a-judge protocol, where a single language model provides preference judgments. For each combination (task $\times$ model pair), we query the judge $N$ times to obtain a stable estimate of its preference. Crucially, these $N$ responses are not independent samples from a population of raters—they are repeated stochastic draws from a single judge with fixed underlying preferences. The variation across repetitions reflects only sampling noise (due to temperature), not genuine diversity of opinion. Therefore, majority voting is the more sensible choice as it treats repeated queries as a variance-reduction mechanism, rather than as independent observations.

\section{Details of User Study}
\label{app:user}

This section details the experimental protocols of our user study. We used Mabyduck\footnote{www.mabyduck.com} to conduct human evaluation and, in total, hired 150 unique crowd-sourced raters from Prolific. Figures \ref{fig:abs_ui} and \ref{fig:pair_ui} exemplify the user interface presented to the raters.

\begin{figure}[t]
    \includegraphics[width=1.0\columnwidth]{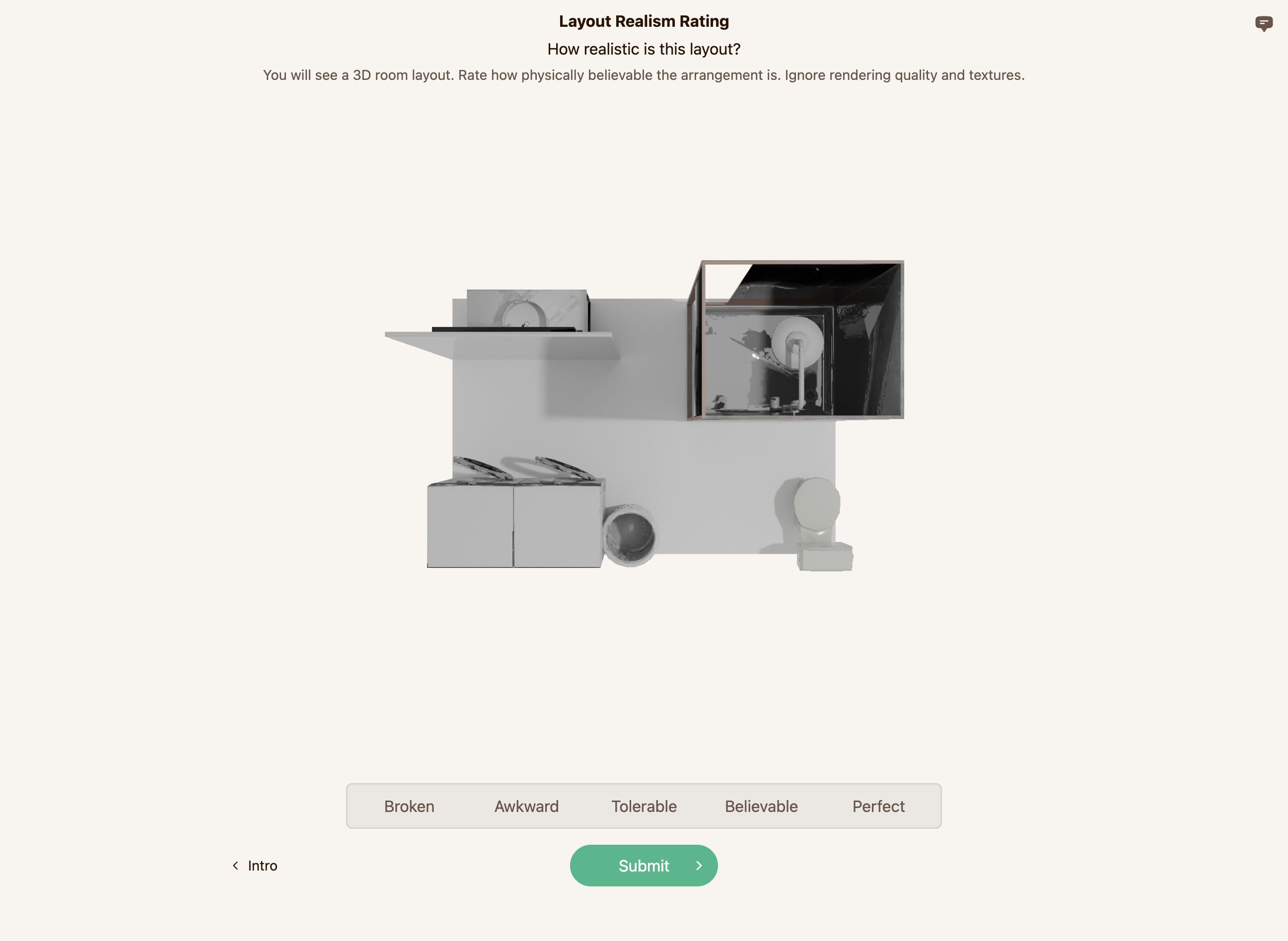}
    \centering
    \caption{User study interface for absolute semantic rating.}
    \label{fig:abs_ui}
\end{figure}

\begin{figure}[t]
    \includegraphics[width=1.0\columnwidth]{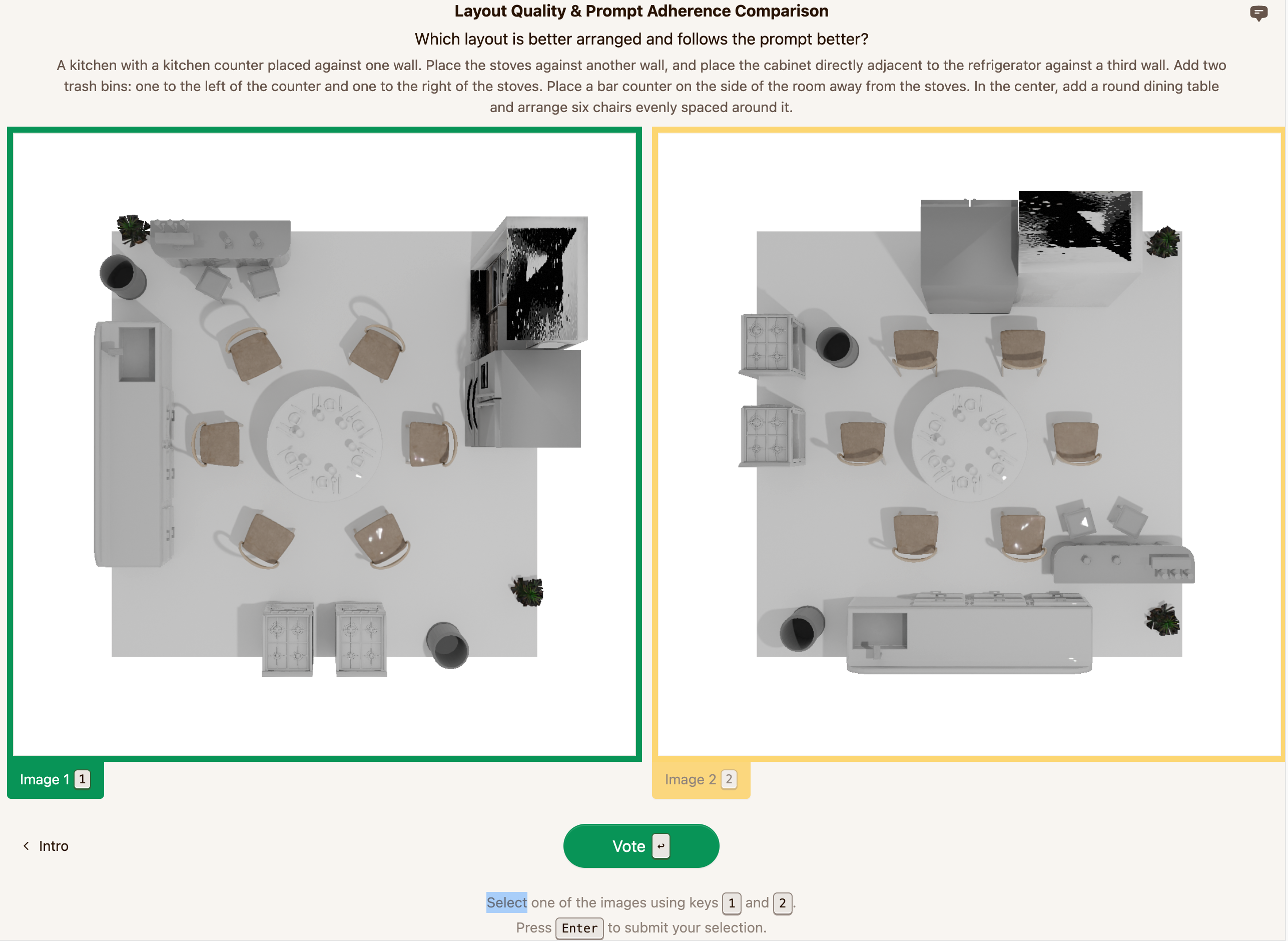}
    \centering
    \caption{User study interface for pairwise rating.}
    \label{fig:pair_ui}
\end{figure}

\subsection{Absolute Rating User Study Details}

For our absolute score ratings, we performed three separate experiments, each focusing on just one semantic metric (\textit{i.e.}, one of Realism, Functionality, and Instruction Following). This helps reduce the cognitive load on the raters and potentially allows for higher-quality results. For each category of rating, we recruited 30 crowd-sourced participants, and each participant was asked to rate the scene on a 5-point scale (corresponding to scores of 1–5). We then rescaled the results by uniformly mapping $[1, 5] \rightarrow[1, 10]$ to ensure consistency with Table \ref{tab:quantitative}.

\subsection{Pairwise User Study Details}
For our pairwise user study in Table \ref{tab:pairwise_user}, 60 unique participants were recruited, each of whom was shown 27 pairs of layouts. Participants were asked to select their preferred layout on the interface shown in Figure \ref{fig:pair_ui}. In total, this gives us 1,620 votes that uniformly cover all tasks $\times$ method pair combinations. To ensure consistency with \ref{app:llm_pairwise} and thus interpretability, we report the win counts at task-level and fit a Bradley-Terry model with identical settings.

\section{Failure Cases}
\label{app:failure}

We present representative failure cases in \cref{fig:failure} to better characterize the limitations of \nickname. 
Common issues include a navigability gap in collision-free layouts, where clearance remains insufficient for human passage, and errors under complex rotational configurations, where dense layouts with diverse orientations make rotation reasoning less reliable.

\section{Limitations and Future Work}
\label{app:limitations}

\textbf{(1) Planar pose formulation.} 
We restrict our study to 2D planar poses to isolate the core challenge of relative spatial reasoning. 
This formulation does not support richer 3D spatial relations such as on top of or multi-level layouts, which require reasoning over support surfaces at varying heights. 
Extending the framework beyond planar, ground-level layouts is an important direction for future work.

\textbf{(2) Bounding-box geometry.} 
Our optimization operates on Oriented Bounding Boxes (OBBs), which cannot faithfully represent non-convex geometry. 
In addition, relations such as inside require explicit containment and clearance reasoning that goes beyond the current OBB formulation. 
Supporting such relations would likely require mesh-level or SDF-based representations, which we leave to future work.

\textbf{(3) Retrieval asset library.} 
Our framework relies on a predefined asset library, as retrieval provides editable assets with clean geometry and decomposable materials. 
However, scene diversity, fidelity, and object availability are bounded by the coverage of the asset pool. 
Future advances in 3D generation, especially for articulated and geometrically diverse objects, could help mitigate this limitation.

\section{Additional Experimental Results}
\label{app:additional}

\subsection{Runtime Comparison}

We provide an additional runtime comparison in \cref{tab:runtime_comparison}. 
Although LayoutGPT and Holodeck are 25\% and 33\% faster, they achieve substantially weaker performance: 
compared to LayoutGPT, \nickname reduces \%CR/\%OR from 4.3/8.1 to 0.0/0.0; 
compared to Holodeck, it improves Real./Func./Instr. by about 100\%/130\%/200\%. 
Compared to LayoutVLM, \nickname is slightly faster while improving Real./Func./Instr. by about 60\%/55\%/30\%. 
These results suggest a favorable trade-off between quality and efficiency.

\begin{table}[t]
\centering
\small
\caption{\textbf{Runtime comparison.} Average runtime (minutes) per scene for different methods under the same settings.}
\label{tab:runtime_comparison}
\begin{tabular}{cc}
\toprule
Method & Runtime (min) \\
\midrule
LayoutGPT         & 3.6 \\
Holodeck          & 3.2 \\
LayoutVLM         & 5.0 \\
\textbf{R$^{3}$L (Ours)} & 4.8 \\
\bottomrule
\end{tabular}
\end{table}

\begin{table*}[t]
\centering
\small
\caption{ Comparison with visual-intermediate methods. Following \cite{wang2024architect}, we generate 2D layout images using image generation models and lift them to 3D. \nickname outperforms visual-intermediate baselines in both physical and semantic metrics. }
\label{tab:visual_compare}
\begin{tabular}{cccccc}
\toprule
Method & \%CR$\downarrow$ & \%OR$\downarrow$ & Real.$\uparrow$ & Func.$\uparrow$ & Instr.$\uparrow$ \\
\midrule
Visual (GPT-Image-1.5) & 2.9 & 2.7 & 5.6 & 4.8 & 5.0 \\
Visual (Nano Banana) & 4.4 & 0.7 & 5.5 & 4.8 & 5.3 \\
\textbf{\nickname (Ours)} & \textbf{0.0} & \textbf{0.0} & \textbf{7.9} & \textbf{7.5} & \textbf{9.1} \\
\bottomrule
\end{tabular}
\end{table*}

\begin{table*}[t]
\centering
\small
\caption{Generality across different backbones. Quantitative results averaged across all scene types. We report results for all four methods under two MLLM backbones. Lower is better for \%CR/\%OR, and higher is better for Real./Func./Instr.}
\label{tab:gemini}
\begin{tabular}{ccccccc}
\toprule
\textbf{Backbone} & \textbf{Method} & \textbf{\%CR}$\downarrow$ & \textbf{\%OR}$\downarrow$ & \textbf{Real.}$\uparrow$ & \textbf{Func.}$\uparrow$ & \textbf{Instr.}$\uparrow$ \\
\midrule
\multirow{4}{*}{GPT-5}
& LayoutGPT         & 4.3 & 8.1 & 5.9 & 5.5 & 8.1 \\
& Holodeck          & 1.3 & 0.8 & 4.0 & 3.3 & 3.0 \\
& LayoutVLM         & 0.6 & 10.9 & 4.9 & 4.8 & 6.9 \\
& \textbf{R$^{3}$L (Ours)} & \textbf{0.0} & \textbf{0.0} & \textbf{7.9} & \textbf{7.5} & \textbf{9.1} \\
\midrule
\multirow{4}{*}{Gemini 3.1 Pro}
& LayoutGPT         & 4.2 & 7.6 & 6.2 & 6.0 & 8.4 \\
& Holodeck          & 1.2 & 0.4 & 4.2 & 3.1 & 3.2 \\
& LayoutVLM         & 0.8 & 10.0 & 5.1 & 5.2 & 7.3 \\
& \textbf{R$^{3}$L (Ours)} & \textbf{0.0} & \textbf{0.0} & \textbf{8.0} & \textbf{7.7} & \textbf{9.2} \\
\bottomrule
\end{tabular}
\vskip -0.15in
\end{table*}

\begin{table}[t]
\centering
\caption{Anchor selection stability analysis. Each configuration is run 9 times per task. For each task, we compute the standard deviation of each metric across runs, and report RMS-averaged values over all tasks. Lower values indicate greater stability. }
\label{tab:stability}
\small
\setlength{\tabcolsep}{5pt}
\begin{tabular}{cccccc}
\toprule
\multirow{2}{*}{Method}
& \multicolumn{5}{c}{Average Std.\ Dev.\ ($\downarrow$)} \\
\cmidrule(lr){2-6}
 & \%CR & \%OR & Real. & Func. & Instr. \\
\midrule
\textit{w/o} Decomp.            & 0.83 & 1.63 & 1.331 & 1.255 & 0.957 \\
\textbf{\nickname (Ours)} & \textbf{0.31} & \textbf{0.90} & \textbf{0.848} & \textbf{0.908} & \textbf{0.945} \\
\bottomrule
\end{tabular}
\vskip -0.15in
\end{table}

\begin{table*}[t]
\centering
\caption{ Quantitative results of CLIP/BLIP/VQA scores for 3D layout generation across nine scene types. CLIP score is computed as the cosine similarity between image and text embeddings. BLIP score is obtained from the BLIP-2 ITM head. VQA score is computed using the CLIP-FlanT5 backbone. Following \cite{ling2025scenethesis}, we average results across two different views to reduce view-dependent bias. }
\resizebox{1.0\linewidth}{!}{
\begin{tabular}{c ccc ccc ccc ccc ccc}

\toprule[1pt]
\multirow{2}[2]{*}{Method} 
& \multicolumn{3}{c}{\textbf{Bathroom}} 
& \multicolumn{3}{c}{\textbf{Bedroom}} 
& \multicolumn{3}{c}{\textbf{Bookstore}} 
& \multicolumn{3}{c}{\textbf{Game Room}} 
& \multicolumn{3}{c}{\textbf{Gym}} \\
\cmidrule(r){2-4} \cmidrule(r){5-7} \cmidrule(r){8-10} \cmidrule(r){11-13} \cmidrule(r){14-16}
& CLIP$\uparrow$ & BLIP$\uparrow$ & VQA$\uparrow$
& CLIP$\uparrow$ & BLIP$\uparrow$ & VQA$\uparrow$
& CLIP$\uparrow$ & BLIP$\uparrow$ & VQA$\uparrow$
& CLIP$\uparrow$ & BLIP$\uparrow$ & VQA$\uparrow$
& CLIP$\uparrow$ & BLIP$\uparrow$ & VQA$\uparrow$ \\
\midrule[0.5pt]
LayoutGPT & 22.86 & 22.72 & 0.7973 & 27.74 & 48.65 & 0.7722 & 21.17 & 37.70 & 0.8537 & 24.32 & 53.08 & 0.8687 & 27.16 & 64.65 & 0.8473 \\
Holodeck & \textbf{24.75} & 12.57 & 0.8205 & \textbf{28.81} & 51.33 & \textbf{0.8325} & \textbf{21.63} & 40.88 & \textbf{0.9163} & 24.07 & 39.62 & 0.8434 & \textbf{30.97} & \textbf{77.95} & 0.8779 \\
LayoutVLM & 21.47 & \textbf{27.79} & 0.7775 & 28.58 & \textbf{54.66} & 0.7520 & 21.09 & 39.19 & 0.8912 & 23.25 & 50.79 & 0.8730 & 30.00 & 75.13 & 0.8223 \\
\textbf{\nickname (Ours)} & 24.26 & 27.19 & \textbf{0.8390} & 28.33 & 45.37 & 0.7457 & 21.55 & \textbf{52.15} & 0.9043 & \textbf{25.32} & \textbf{58.29} & \textbf{0.8864} & 30.50 & 65.68 & \textbf{0.8908} \\
\midrule[1pt]

\multirow{2}[2]{*}{Method} 
& \multicolumn{3}{c}{\textbf{Kitchen}} 
& \multicolumn{3}{c}{\textbf{Living Room}} 
& \multicolumn{3}{c}{\textbf{Music Room}} 
& \multicolumn{3}{c}{\textbf{Restaurant}} 
& \multicolumn{3}{c}{\cellcolor{gray!20} \textbf{Average}} \\
\cmidrule(r){2-4} \cmidrule(r){5-7} \cmidrule(r){8-10} \cmidrule(r){11-13} \cmidrule(r){14-16}
& CLIP$\uparrow$ & BLIP$\uparrow$ & VQA$\uparrow$
& CLIP$\uparrow$ & BLIP$\uparrow$ & VQA$\uparrow$
& CLIP$\uparrow$ & BLIP$\uparrow$ & VQA$\uparrow$
& CLIP$\uparrow$ & BLIP$\uparrow$ & VQA$\uparrow$
& CLIP$\uparrow$ & BLIP$\uparrow$ & VQA$\uparrow$ \\
\midrule[0.5pt]
LayoutGPT & 24.70 & 38.26 & 0.8935 & \textbf{26.56} & 66.84 & \textbf{0.8960} & \textbf{30.28} & 84.77 & 0.8974 & \textbf{32.42} & 81.88 & 0.7351 & 26.36 & 55.39 & 0.8401 \\
Holodeck & 24.33 & 39.48 & \textbf{0.8959} & 23.94 & 64.88 & 0.8867 & 26.14 & 77.37 & 0.9143 & 29.33 & 82.85 & 0.7810 & 26.00 & 54.11 & 0.8632 \\
LayoutVLM & 22.60 & 49.82 & 0.8594 & 24.58 & \textbf{66.86} & 0.8936 & 27.83 & 76.94 & 0.9027 & 30.90 & \textbf{86.93} & 0.8051 & 25.59 & 58.68 & 0.8419 \\
\textbf{\nickname (Ours)} & \textbf{25.07} & \textbf{53.10} & 0.8855 & 25.95 & 61.59 & 0.8908 & 28.73 & \textbf{89.21} & \textbf{0.9210} & 30.32 & 86.33 & \textbf{0.8122} & \textbf{26.67} & \textbf{59.88} & \textbf{0.8640} \\

\bottomrule[1pt]

\end{tabular}}
\label{tab:clip_blip_vqa_quantitative}
\end{table*}

\begin{table*}[t]
\centering
\small
\caption{ Alignment between human judgments and CLIP/BLIP/VQA scores. 
We compute Kendall's $\tau$-b between the rankings induced by human judgments and CLIP/BLIP/VQA scores. 
The substantially weaker correlations of CLIP/BLIP/VQA indicate that these metrics serve only as supplementary signals for fine-grained spatial correctness in our setting. }
\label{tab:clip_blip_vqa_alignment}
\setlength{\tabcolsep}{12pt}
\begin{tabular}{cccc}
\toprule
Method & Real.\,$\uparrow$ & Func.\,$\uparrow$ & Instr.\,$\uparrow$ \\
\midrule
User-User & \textbf{0.56} & \textbf{0.55} & 0.53 \\
User-LLM  & 0.52 & 0.50 & \textbf{0.56} \\
\midrule
User-CLIP & 0.08 & 0.11 & 0.09 \\
User-BLIP & 0.03 & 0.07 & 0.04 \\
User-VQA & 0.08 & 0.07 & -0.01 \\
\bottomrule
\end{tabular}
\end{table*}

\begin{table*}[t]
\centering
\caption{Placement error rate by hop count (\%\,$\downarrow$).
\textbf{Adding Decomp. noticeably flattens the growth} by reducing reference-frame shifts. 
\textbf{Adding Imag.\ lowers the overall level} by improving metric consistency.}
\label{tab:hop-error-rates}
\small
\setlength{\tabcolsep}{3pt}
\begin{tabular}{cccccc}
\toprule
Method & Hop 1 & Hop 2 & Hop 3 & Hop 4 & Hop 5+ \\
\midrule
\textit{w/o} Decomp. \& Imag.
  & 14.8 & 22.3 & 22.7 & 30.5 & 47.6 \\
\textit{w/o} Decomp.
  & 5.7 & 12.3 & 15.0 & 24.5 & 31.2 \\
\textit{w/o} Imag.
  & 11.9 & 18.5 & 15.0 & 20.8 & 22.4 \\
$\mathbf{R^3L}$ \textbf{(Ours)}
  & \textbf{3.5} & \textbf{9.6} & \textbf{11.2} & \textbf{6.9} & \textbf{12.3} \\
\bottomrule
\end{tabular}
\end{table*}

\subsection{Comparison with Visual-Intermediate Methods}

LLM-based methods \cite{yang2024holodeck, sun2025layoutvlm} and visual-intermediate methods \cite{ling2025scenethesis, wang2024architect} address the problem from different perspectives. 
Visual-intermediate approaches primarily emphasize visual fidelity, whereas our method focuses on spatial correctness through controllable and instruction-faithful relations.

To further examine this, we construct visual-intermediate baselines following \cite{wang2024architect} by first generating 2D layout images using two image generation backbones (GPT-Image-1.5 and Nano Banana) and then lifting them back to 3D using grounding and depth estimation models. 
To provide a fair upper-bound estimate, human annotators further correct misidentified or misplaced objects in the lifted results by referencing the 2D images.

As shown in \cref{tab:visual_compare} and \cref{fig:visual}, \nickname outperforms these baselines in both physical feasibility and semantic coherence.
These results suggest that our LLM-based method is better suited for tasks requiring precise spatial control and faithful instruction following.

\subsection{Generality across Backbones}

To evaluate generality across backbones, we provide additional results using Gemini 3.1 pro in \cref{tab:gemini} and \cref{fig:gemini}. 
\nickname consistently outperforms the corresponding baselines, indicating that \nickname is not tied to GPT-5 alone.

\subsection{Anchor Selection}

We do not impose a hand-crafted anchor selection criterion.
Instead, the MLLM determines the anchor jointly with unit construction, with the only constraint that the anchor connects the unit to the rest of the scene. 

To examine the sensitivity of this design, we conduct repeated experiments under the same setting in \cref{tab:stability}. 
The full model exhibits consistently lower run-to-run variance than the variant without decomposition. 
These results suggest that decomposition with anchor assignment improves the stability of the overall framework.

\subsection{CLIP/BLIP/VQA Scores}

Following \cite{ling2025scenethesis}, we additionally report CLIP, BLIP and VQA scores in \cref{tab:clip_blip_vqa_quantitative}. 
\nickname achieves the highest average scores across these metrics, although the ranking of methods remains unstable across different scene types. 

We further evaluate the alignment of these metrics with human judgments in \cref{tab:clip_blip_vqa_alignment}.
The correlations are notably limited for CLIP/BLIP/VQA, performing substantially weaker than our LLM-based evaluation. 
A likely explanation is that these traditional metrics primarily capture global image-text semantic compatibility rather than the precise instantiation of specific spatial relations.
Consequently, we include CLIP/BLIP/VQA scores as supplementary metrics, while retaining LLM-based evaluation as the primary metric for fine-grained spatial correctness.
Together, they provide a more comprehensive assessment of the generated layouts.

\subsection{MLLM Reasoning Trace Example}

\cref{fig:reasoning_trace} shows an example of an MLLM reasoning trace, illustrating how repeated reference-frame shifts in spatial reasoning lead to semantic drift.

\subsection{Placement Error Rate by Hop Count}
\label{app:error-rate}

We report the percentage of misplaced objects, grouped by their hop count from the root of the inferred relation graph. 
For each rendered layout, two blinded human annotators first identify misplaced objects, and the marked objects are then assigned to bins based on their hop count in the relation graph. 
As shown in \cref{tab:hop-error-rates}, when both modules are removed, the error rate increases substantially with hop count.
Our full model maintains low error rate across all hop counts, reducing both the overall error rate and its growth.

\begin{figure*}[t]
    \centering
    \includegraphics[width=2.0\columnwidth]{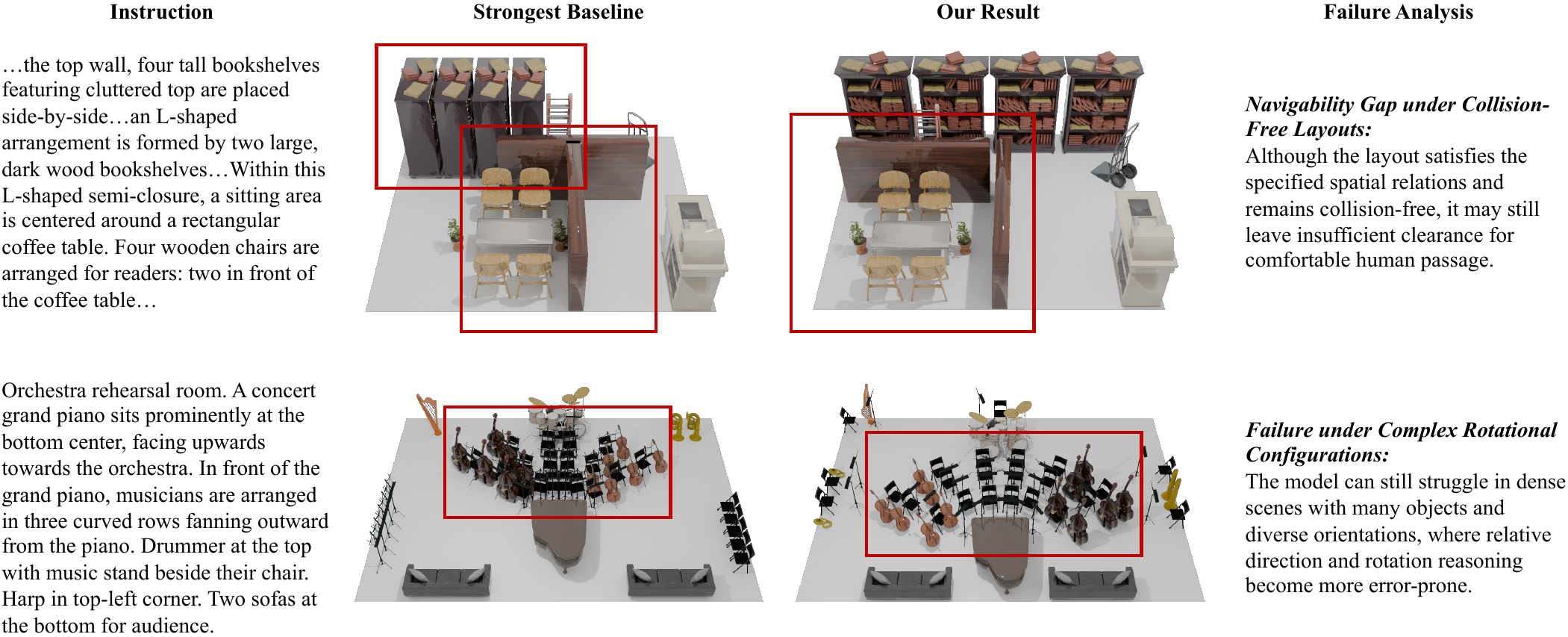}
    \caption{\textbf{Representative failure cases.} Navigability gaps in collision-free layouts and errors under complex rotational configurations.}
    \label{fig:failure}
\end{figure*}

\begin{figure*}[t]
    \centering
    \includegraphics[width=2.0\columnwidth]{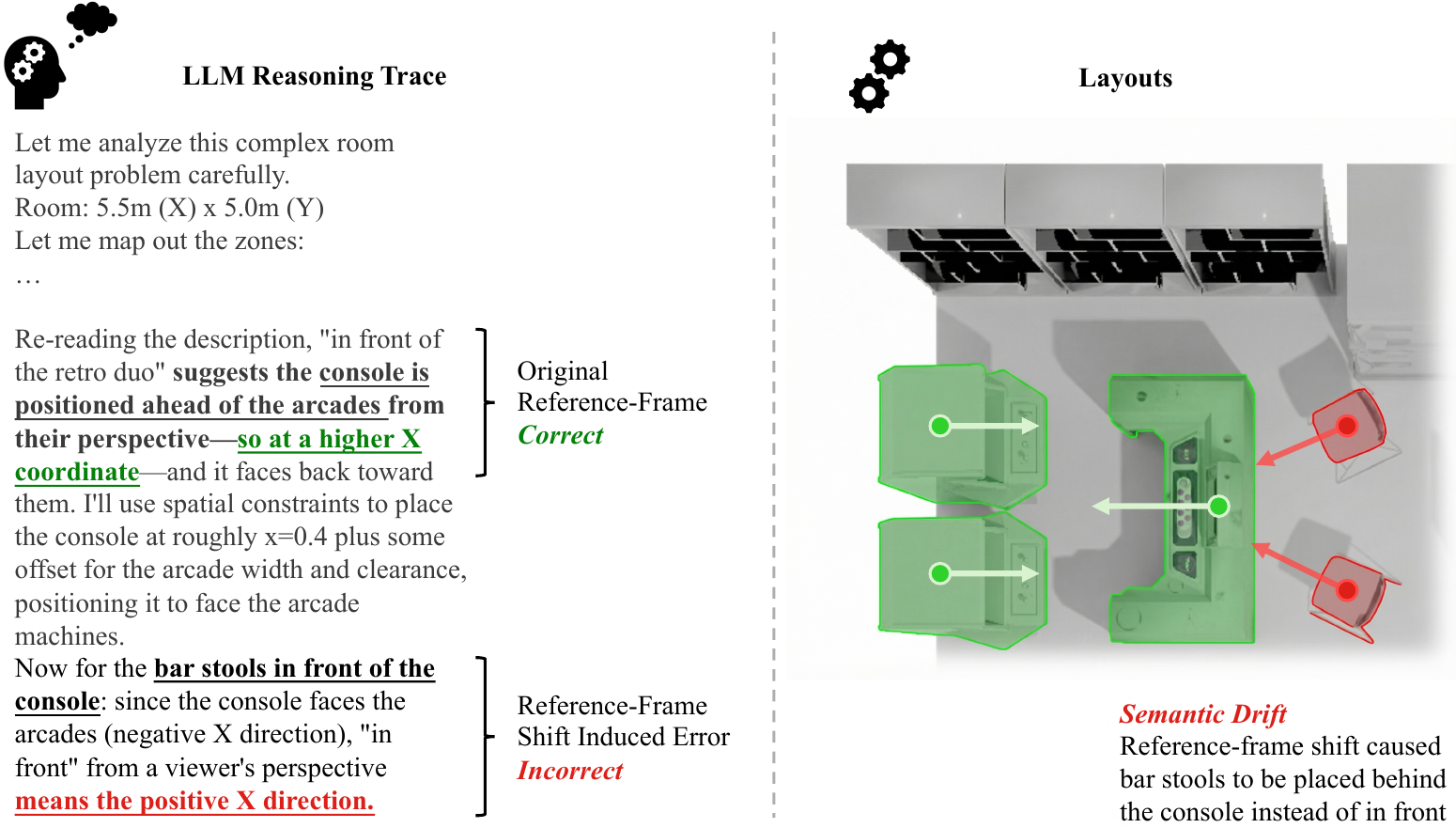}
    \caption{MLLM reasoning trace example. The model initially places the console correctly in front of the arcade machines, but after a reference-frame shift, it misinterprets ``in front of the console'' and places the bar stools behind the console.}
    \label{fig:reasoning_trace}
\end{figure*}

\begin{figure*}[t]
    \centering
    \includegraphics[width=2.0\columnwidth]{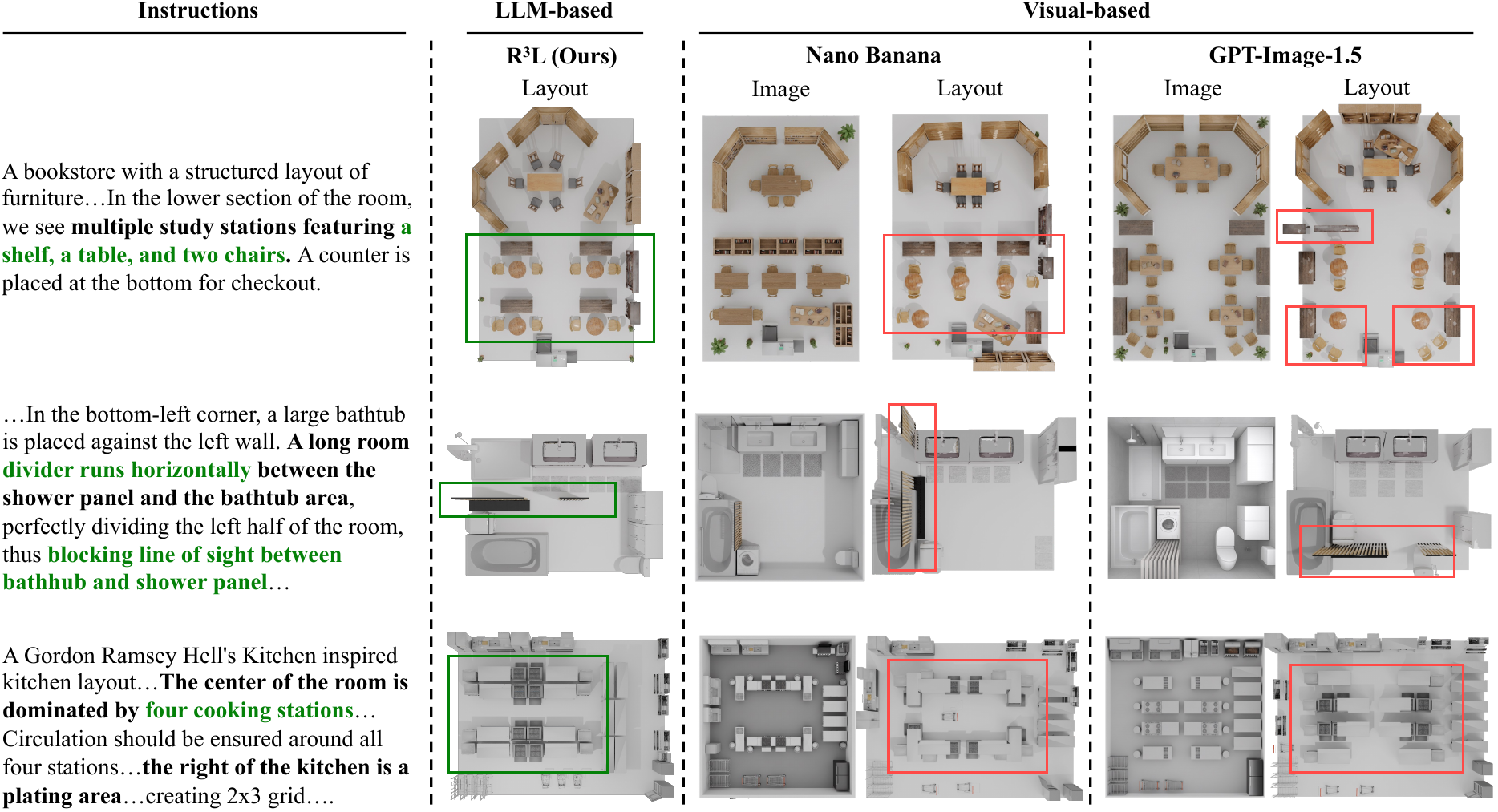}
    \caption{Qualitative comparison with visual-intermediate methods. Compared to visual-intermediate baselines, \nickname better preserves fine-grained instruction following and explicit spatial correctness.}
    \label{fig:visual}
\end{figure*}


\begin{figure*}[t]
    \centering
    \includegraphics[width=2.0\columnwidth]{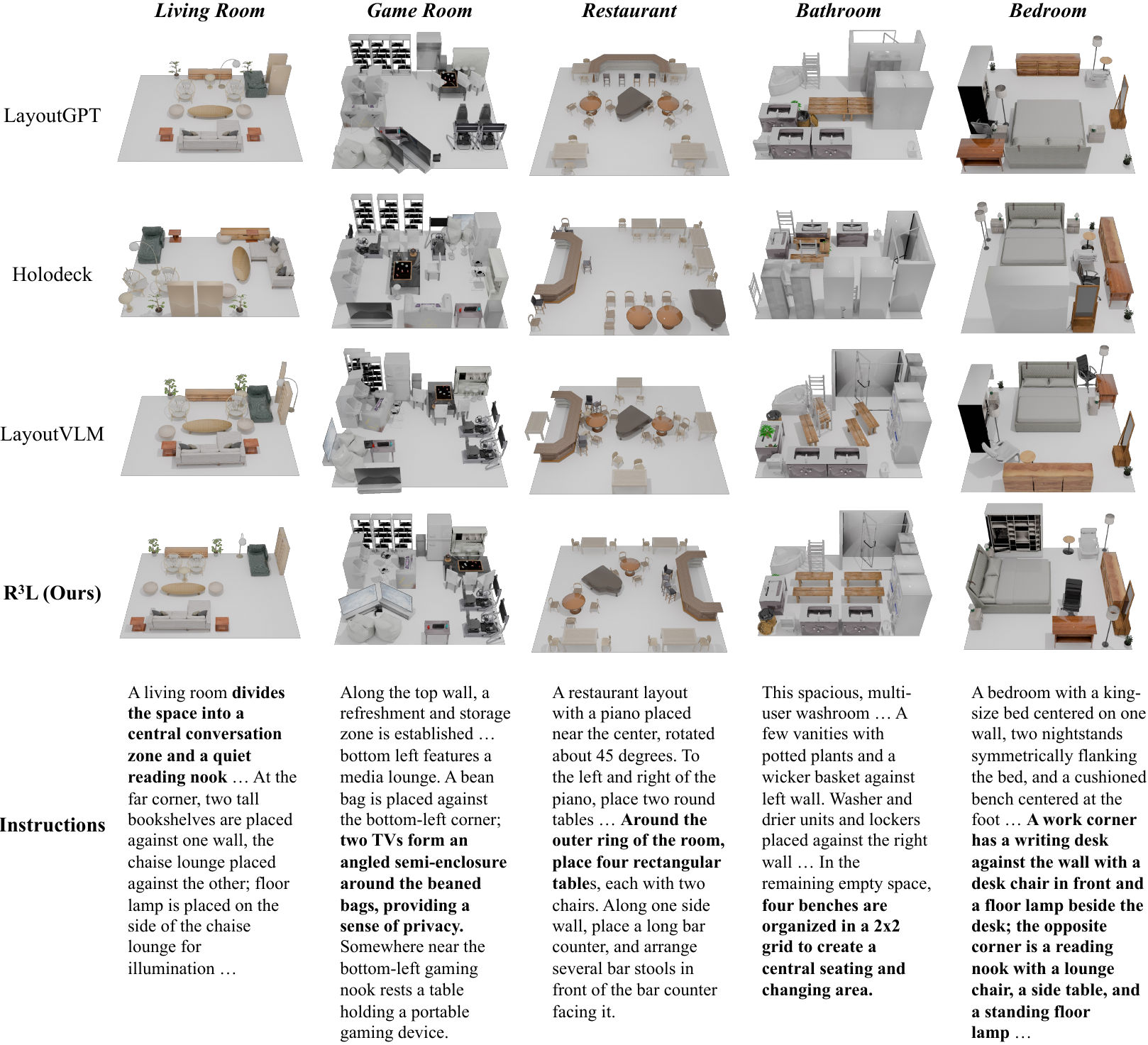}
    \caption{Additional qualitative comparisons between \nickname and existing baselines under the same instructions. \nickname consistently produces physically feasible and semantically consistent layouts across tasks, while effectively following instructions.}
    \label{fig:quantitative_with_instr2}
\end{figure*}

\begin{figure*}[t]
    \centering
    \includegraphics[width=2.0\columnwidth]{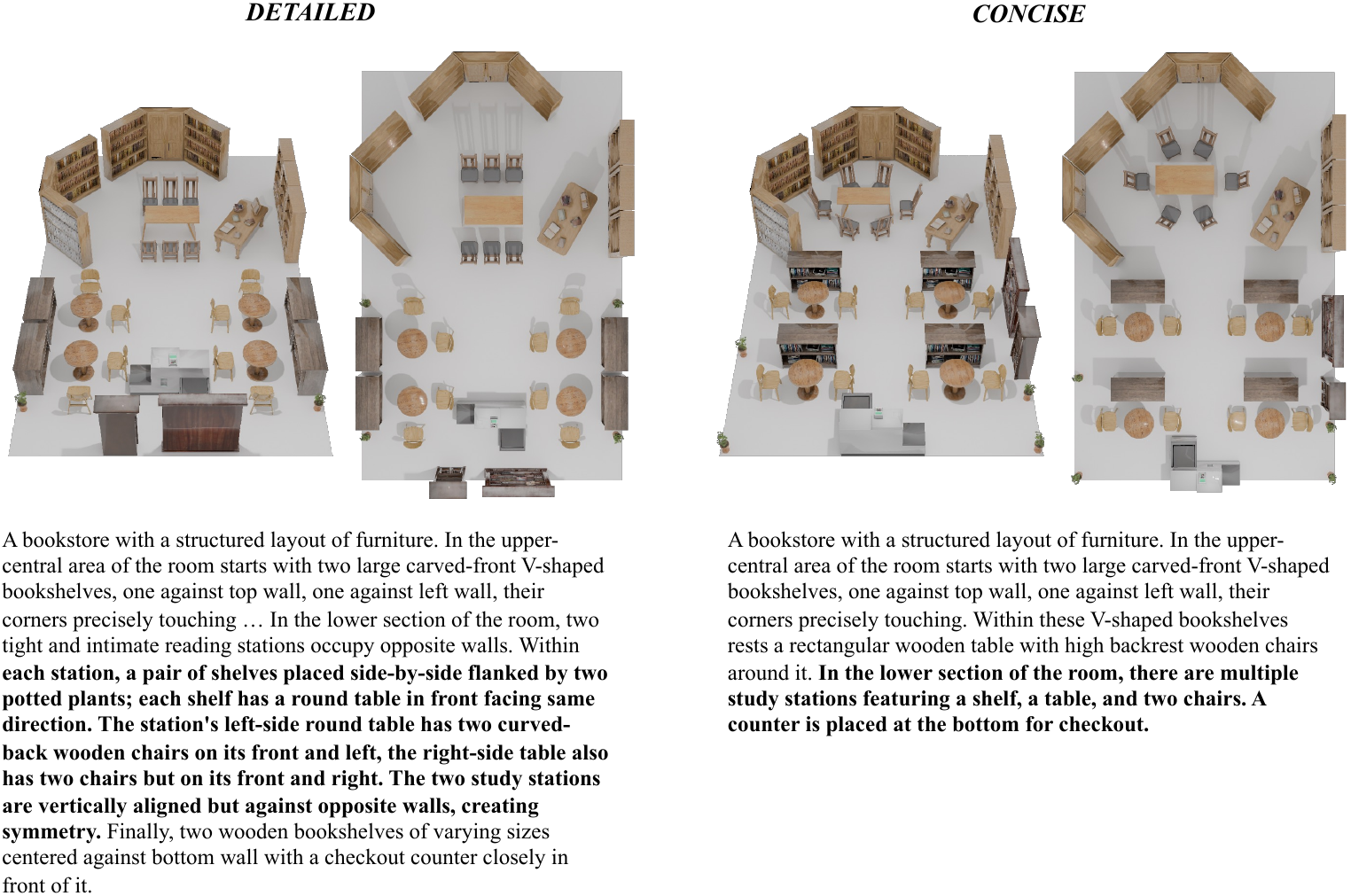}
    \caption{Qualitative results demonstrating robustness to varying levels of prompt verbosity under the same asset set and room dimensions. \textbf{Left:} \nickname accurately reproduces the complex spatial arrangement of \emph{study stations} from a detailed prompt. \textbf{Right:} \nickname produces equally compelling layouts from a high-level description (\textit{i.e.}, ``multiple study stations featuring a shelf, a table, and two chairs’’).}
    \label{fig:prompt_robust}
\end{figure*}

\begin{figure*}[t]
    \centering
    \includegraphics[width=2.0\columnwidth]{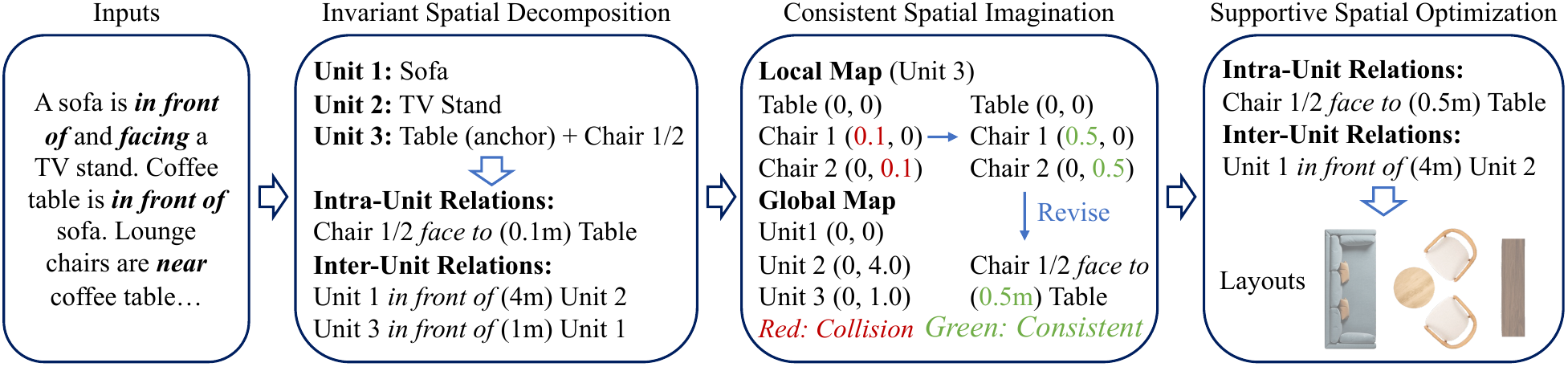}
    \caption{A run-through example of \nickname that illustrates how \nickname performs decomposition, imagination, and optimization. }
    \label{fig:run_example}
\end{figure*}

\begin{figure*}[t]
    \centering
    \includegraphics[width=2.0\columnwidth]{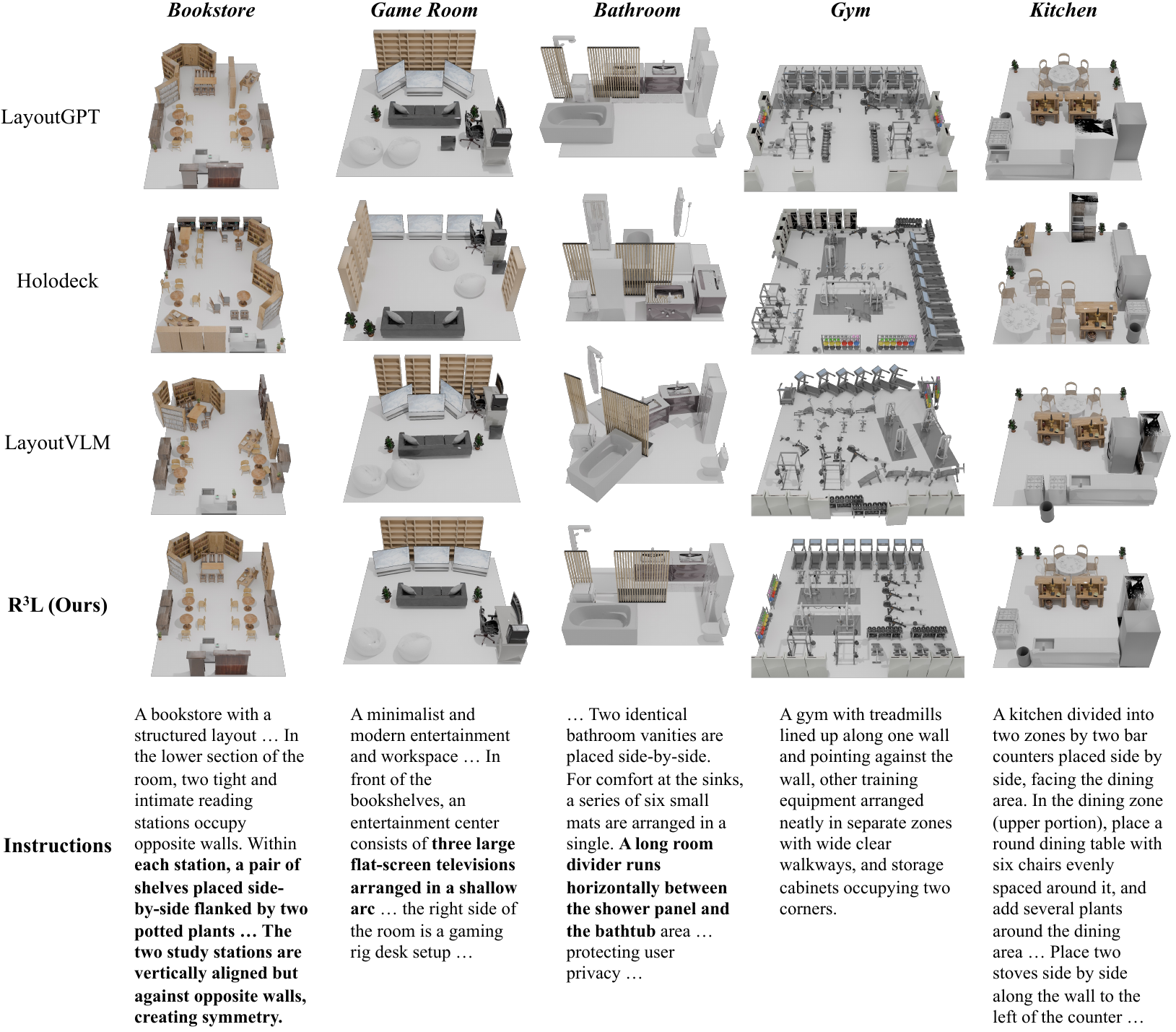}
    \caption{Qualitative results using Gemini 3.1 pro as the backbone. \nickname consistently improves over the corresponding Gemini-based baseline, showing that the proposed framework is not tied to GPT-5 and generalizes to a different MLLM backbone.}
    \label{fig:gemini}
\end{figure*}

\begin{figure*}[t]
    \centering
    \includegraphics[width=2.0\columnwidth]{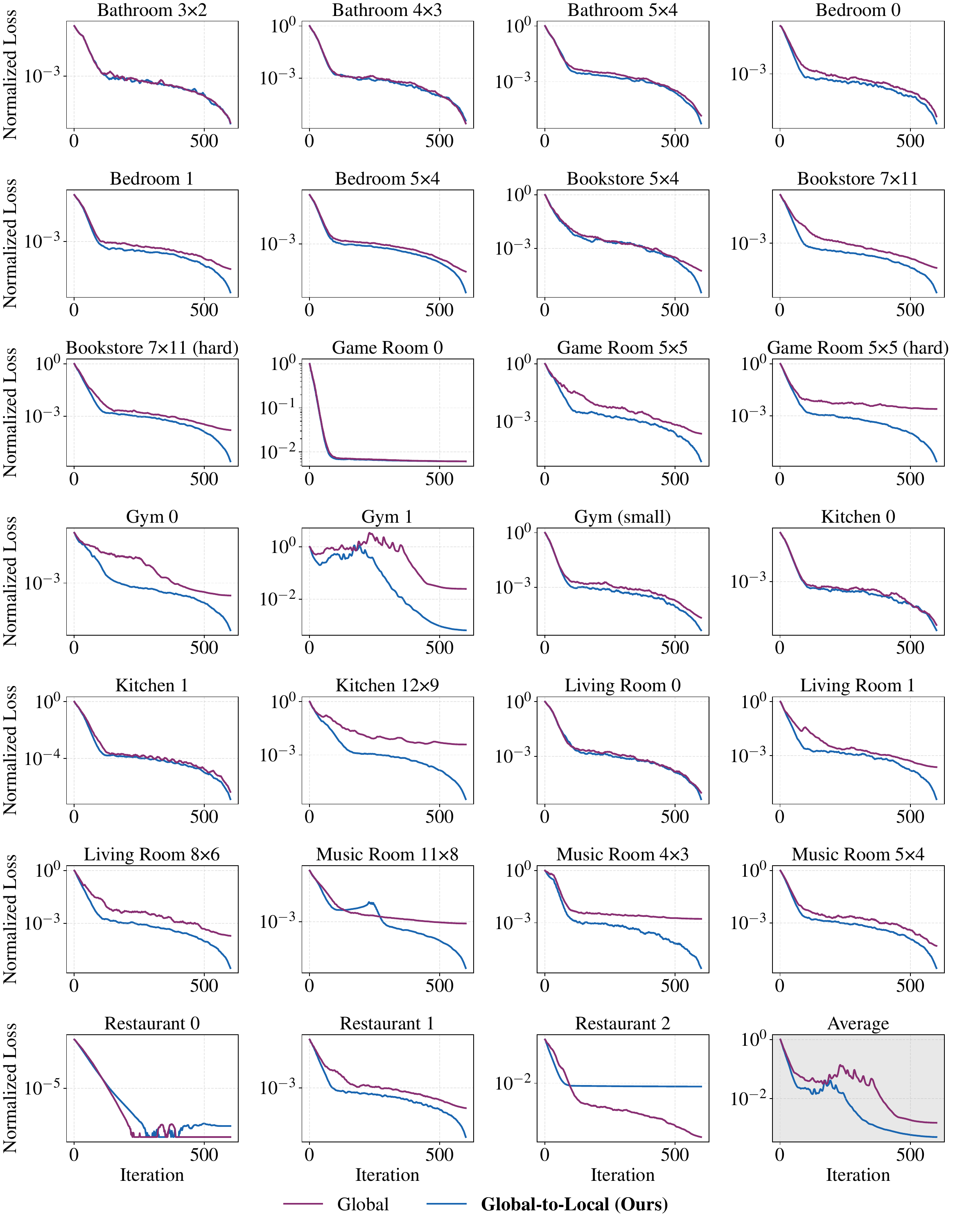}
    \caption{Comparison of convergence curves for global and global-to-local optimization under identical optimizer settings. Each plot is obtained by averaging the normalized loss of three independent runs using different seeds and smoothed with EMA ($\alpha=0.85$). Overall, global-to-local reparameterization achieves faster convergence and lower final loss in scenes with a large number of objects.}
    \label{fig:loss_grid}
\end{figure*}

\end{document}